\title{On Constrained Open-World Probabilistic Databases}
\author{Tal Friedman \And Guy Van den Broeck \\
  \affiliations
  Department of Computer Science \\ University of California, Los Angeles \\
  \emails
  \{tal, guyvdb\}@cs.ucla.edu
}
\newtheorem{theorem}{Theorem}
\newtheorem{proposition}[theorem]{Proposition}
\newtheorem{lemma}[theorem]{Lemma}
\newtheorem{corollary}[theorem]{Corollary}
\theoremstyle{definition}
\newtheorem{definition}{Definition}
\newcommand{\gf}{\mathcal{G}}
\newcommand{\pdb}{\mathcal{P}}
\newcommand{\rf}{\mathcal{R}}
\newcommand{\tf}{\mathcal{T}}
\newcommand{\df}{\mathcal{D}}
\newcommand{\kgp}{K_\gf^\Phi}
\newcommand{\otups}{\mathbf{O}}
\newcommand{\Query}{\ensuremath{\mathsf{Q}}\xspace}
\newcommand{\ax}[1]{\ensuremath{\left<#1\right>}\xspace}
\begin{document}

\maketitle

\begin{abstract}
    Increasing amounts of available data have led to a heightened need for representing large-scale probabilistic knowledge bases. One approach is to use a probabilistic database, a model with strong assumptions that allow for efficiently answering many interesting queries. Recent work on open-world probabilistic databases strengthens the semantics of these probabilistic databases by discarding the assumption that any information not present in the data must be false. While intuitive, these semantics are not sufficiently precise to give reasonable answers to queries. We propose overcoming these issues by using constraints to restrict this open world. We provide an algorithm for one class of queries, and establish a basic hardness result for another. Finally, we propose an efficient and tight approximation for a large class of queries.

\end{abstract}

\section{Introduction}
An ubiquitous pursuit in the study of knowledge base representation is the search for a model that can represent uncertainty while simultaneously answering interesting queries efficiently. The key underlying challenge is that these goals are at odds with each other. Modelling \emph{uncertainty} requires additional model complexity. At the same time, the ability to answer \emph{meaningful queries} usually demands fewer model assumptions. Both of these properties are at odds with the key limiting factor of \emph{tractability}: success in the first two goals is not nearly as impactful if it is not achieved efficiently. Unfortunately, probabilistic reasoning is often computationally hard, even on databases~\citep{roth1996hardness,Dalvi2012TheDO}.

One approach towards achieving this goal is to begin with a simple model such a probabilistic database (PDB) \citep{Suciu:2011:PD:2031527,VdBFTDB17}. A PDB models uncertainty, but is inherently simple and makes very strong independence assumptions and closed-world assumptions allowing for tractability on a very large class of queries \citep{Dalvi2007TheDO,Dalvi2012TheDO}. However, PDBs can fall short under non-ideal circumstances, as their semantics are brittle to incomplete knowledge bases \citep{Ceylan2016OpenWorldPD}.

To bring PDBs closer to the desired goal, \citet{Ceylan2016OpenWorldPD} propose open-world probabilistic databases (OpenPDB), wherein the semantics of a PDB are strengthened to relax the closed-world assumption. While OpenPDBs maintain a large class of tractable queries, their semantics are so relaxed these queries lose their precision: they model further uncertainty, but in exchange give less useful query answers.

In this work, we aim to overcome these querying challenges, while simultaneously maintaining the degree of uncertainty modeled by OpenPDBs. To achieve this, we propose further strengthening the semantics of OpenPDBs by constraining the mean probability allowed for a relation. These constraints work at the \emph{schematic} level, meaning no additional per-item information is required. They are practically motivated by knowledge of summary statistics, of how many tuples we expect to be true. A theoretical analysis shows that, despite their simplicity, such constraints fundamentally change the difficulty landscape of queries, leading us to propose a general-purpose approximation scheme.

The rest of the paper is organized as follows: Section~\ref{sec:background} provides necessary background on relational logic and PDBs, as well as an introduction to OpenPDBs. Section~\ref{sec:mtp} motivates and introduces our construction for constraining OpenPDBs. Section~\ref{sec:exact} analyses exact solutions subject to these constraints, providing a class of tractable queries along with an algorithm. It also shows that the problem is in general hard, even in some cases where standard PDB queries are tractable. Section~\ref{sec:approx} investigates an efficient and provably bounded approximation scheme. Section~\ref{sec:disc} discusses our findings, and summarizes interesting directions that we leave as open problems.

\section{Background} \label{sec:background}

This section provides background and motivation for probabilistic databases and their open-world counterparts. 
Notation and definitions are adapted from \citet{Ceylan2016OpenWorldPD}.

\subsection{Relational Logic and Databases}
We now describe necessary background from \emph{function-free finite-domain} first-order logic. An atom $R(x_1,x_2,...,x_n)$ consists of a predicate $R$ of arity $n$, together with $n$ arguments. These arguments can either be \emph{constants} or \emph{variables}. A \emph{ground atom} is an atom that contains no variables. A \emph{formula} is a series of atoms combined with conjunctions ($\land$) or disjunctions ($\lor$), and with quantifiers $\forall, \exists$. A \emph{substitution} $\Query[x/t]$ replaces all occurences of $x$ by $t$ in a formula $\Query$.

A relational \emph{vocabulary} $\sigma$ is comprised of a set of predicates $\rf$ and a domain $\df$. Using the \emph{Herbrand semantics} \citep{hinrichs2006herbrand}, the \emph{Herbrand base} of $\sigma$ is the set of all ground atoms possible given $\rf$ and $\df$. A $\sigma$-interpretation $\omega$ is then an assignment of truth values to every element of the Herbrand base of $\sigma$. We say that $\omega$ \emph{models} a formula $\Query$ whenever $\omega$ satisfies $\Query$. This is denoted by $\omega \models \Query$.

\begin{figure}
\centering
\scalebox{0.8}{
\setlength{\tabcolsep}{3pt}
\begin{tabular}{l}
\toprule
 \multicolumn{1}{c}{Scientist} \\  
\midrule
Einstein \\
Erd\H{o}s \\
von Neumann \\
\bottomrule
\end{tabular}
}
\qquad
\scalebox{0.8}{
\setlength{\tabcolsep}{3pt}
\begin{tabular}{l l}
\toprule
 \multicolumn{2}{c}{CoAuthor}  \\
\midrule
 Einstein & Erd\H{o}s \\
 Erd\H{o}s & von Neumann \\
\bottomrule
\end{tabular}
}
\caption{Example relational database. Notice that the first row of the right table corresponds to the atom CoAuthor(Einstein, Erd\H{o}s). \label{fig:exdb}}
\end{figure}

A reasonable starting point for the target knowledge base to construct would be to use a traditional \emph{relational database}. Using the standard model-theoretic view \citep{Abiteboul1995FoundationsOD}, a relational database for a vocabulary $\sigma$ is a $\sigma$-interpretation $\omega$. Less formally, a relational database consists of a series of relations, each of which corresponds to a predicate. Each relation consists of a series of rows, also called \emph{tuples}, each of which corresponds to an atom of the predicate being true. Any atom not appearing as a row in the relation is considered to be \textit{false}, following the closed-world assumption \citep{reiter1981closed}. Figure~\ref{fig:exdb} shows an example database. 

\subsection{Probabilistic Databases}

\begin{figure}
\centering
\scalebox{0.8}{
\setlength{\tabcolsep}{3pt}
\begin{tabular}{l | c}
\toprule
 \multicolumn{1}{c|}{Scientist} & $\Pr$   \\
\midrule
Einstein & 0.8 \\
Erd\H{o}s & 0.8 \\
von Neumann & 0.9 \\
Shakespeare & 0.2 \\
\bottomrule
\end{tabular}
}
\qquad
\scalebox{0.8}{
\setlength{\tabcolsep}{3pt}
\begin{tabular}{l l | c}
\toprule
 \multicolumn{2}{c|}{CoAuthor} & $\Pr$ \\
\midrule
 Einstein & Erd\H{o}s & 0.8 \\
 Erd\H{o}s & von Neumann & 0.9 \\
 von Neumann & Einstein & 0.5 \\
\bottomrule
\end{tabular}
}
\caption{Example probabilistic database. Tuples are now of the form $\langle t:p \rangle$ where $p$ is the probability of the tuple $t$ being present. \label{fig:expdb}}
\end{figure}

Despite the success of relational databases, their deterministic nature leads to a few shortcomings. A common way to gather a large knowledge base is to apply some sort of statistical model \citep{Carlson2010TowardAA,yago,Peters2014AMR, Dong14Vault} which returns a probability value for potential tuples. Adapting the output of such a model to a relational database involves thresholding on the probability value, discarding valuable information along the way. A \emph{probabilistic database} (PDB) circumvents this problem by assigning each tuple a probability.

\begin{definition}
 A \emph{(tuple-independent) probabilistic database} $\pdb$ for a vocabulary $\sigma$ is a finite set of tuples of the form $\langle{t: p}\rangle$ where $t$ is a $\sigma$-atom and $p\in [0,1]$. Furthermore, each $t$ can appear at most once. 
\end{definition}

Given such a collection of tuples and their probabilities, we are now going to define a \emph{distribution} over relational databases. The semantics of this distribution are given by treating each tuple as an independent random variable.

\begin{definition}
  A probabilistic database $\pdb$ for vocabulary $\sigma$ induces a probability distribution over $\sigma$-interpretations $\omega$:
  \begin{align*}
     & P_\pdb(\omega) = \prod_{t \in \omega} P_\pdb(t) \prod_{t \notin \omega} (1-P_\pdb(t))\\
     & \text{ where~~~}  P_\pdb(t) = 
      \begin{cases}
      p & \textnormal{if } \langle t:p \rangle \in \pdb \\
      0 & \textnormal{otherwise}
      \end{cases}
  \end{align*}
\end{definition}

Notice this last statement is again making the closed-world assumption: any tuple that we have no information about is assigned probability zero. Figure~\ref{fig:expdb} shows an example PDB.

\paragraph{Probabilistic Queries}
In relational databases, the fundamental task we are interested in solving is how to answer queries. The same is true for probabilistic databases, with the only difference being that we are now interested in probabilities over queries. In particular, we are interested in queries that are fully quantified - also known as \emph{Boolean queries}. On a relational database, this corresponds to a query that has an answer of True or False.

For example, on the database given in Figure~\ref{fig:exdb}, we might ask if there is a scientist who is a coauthor:
$$ \Query_1 = \exists x. \exists y. S(x) \land \mathit{CoA}(x,y)$$

If we instead asked this query of the probabilistic database in Figure~\ref{fig:expdb}, we would be computing the probability by summing over the worlds in which the query is true:
$$ P(\Query_1) = \sum_{\omega \models \Query_1} P_\pdb(\omega) $$

Queries of this form that are a conjunction of atoms are called \emph{conjunctive queries}. They are commonly shortened as:
$$ \Query_1 = S(x), \mathit{CoA}(x,y).$$

A disjunction of conjunctive queries is known as a \emph{union of conjunctive queries} (UCQ). UCQs have been shown to live in a dichotomy of efficient evaluation \citep{Dalvi2012TheDO}: computing the probability of a UCQ is either polynomial in the size of the database, or it is $\#P$-hard. This property can be checked through the syntax of a query, and we say that a UCQ is \emph{safe} if it admits efficient evaluation. In the literature of probabilistic databases \citep{Suciu:2011:PD:2031527,Dalvi2012TheDO}, as well as throughout the rest of this paper, UCQs are the primary query object studied.


\subsection{Open-World Probabilistic Databases}
In the context of automatically constructing a knowledge base, as is done in for example NELL \citep{Carlson2010TowardAA} or Google's Knowledge Vault \citep{Dong14Vault}, making the closed-world assumption is conceptually unreasonable. Conversely, it is also not feasible to include all possible tuples and their probabilities in the knowledge base. The resulting difficulty is that there are an enormous number of probabilistic facts that can be scraped from the internet, and by definition these tools will keep only those with the very highest probability. As a result, knowledge bases like NELL \citep{Carlson2010TowardAA}, PaleoDeepDive \citep{Peters2014AMR}, and YAGO \citep{yago} consist almost entirely of probabilities above $0.95$. This tells us that the knowledge base we are looking at is fundamentally \emph{incomplete}. In response to this problem, \citet{Ceylan2016OpenWorldPD} propose the notion of a \emph{completion} for a probabilistic database.

\begin{definition}
A \emph{$\lambda$-completion} of a probabilistic database $\pdb$ is another
probabilistic database obtained as follows. For each atom $t$ that does not
appear in $\pdb$, we add tuple $\langle{t:p}  \rangle$ to $\pdb$ for some $p \in [0,\lambda]$.
\end{definition}

Then, we can define the open world of possible databases in terms of the set of distributions induced by all completions.
\begin{definition}
  An \emph{open-world probabilistic database} (OpenPDB) is a pair $\gf=(\pdb, \lambda)$,
  where $\pdb$ is a probabilistic database and $\lambda \in [0,1]$. $\gf$ induces a set of probability distributions
 $K_{\gf}$ such that a distribution P belongs to $K_{\gf}$ iff P is induced by
 some $\lambda$-completion of probabilistic database $\pdb$. 
\end{definition}

\paragraph{Open-World Queries}
OpenPDBs specify a set of probability distributions rather than a single one, meaning that a given query produces a set of possible probabilities rather than a single one. We focus on computing the minimum and maximum possible probability values that can be achieved by completing the database. 
\begin{definition}
  The \emph{probability interval of a Boolean query $\Query$} in OpenPDB $\gf$ is
  $K_{\gf}(\Query)=[\underline{P}_{\gf}(\Query), \overline{P}_{\gf}(\Query)]$, where
  \begin{align*}
  \underline{P}_{\gf}(\Query)=\min_{P\in K_{\gf}}P(\Query) && \overline{P}_{\gf}(\Query)=\max_{P\in K_{\gf}}P(\Query)] 
\end{align*}
\end{definition}

In general, computing the probability interval for some first-order $\Query$ is not
tractable. As observed in \citet{Ceylan2016OpenWorldPD}, however, the situation is different for UCQ queries, because they are monotone (they contain no negations). For UCQs, the upper and lower bounds are given respectively by the full completion
(where all unknown probabilities are $\lambda$), and the closed world
database. This is a direct result of the fact that OpenPDBs form a credal set: a closed convex set of probability measures, meaning that probability bounds always come from extreme points~\citep{Cozman2000CredalN}.
Furthermore, \citet{Ceylan2016OpenWorldPD} also provide an algorithm for efficiently computing this upper bound corresponding to a full completion, and show that it works whenever the UCQ is safe.

\section{Mean-Constrained Completions}\label{sec:mtp}

This section motivates the need to strengthen the OpenPDB semantics, and introduces our novel probabilistic data model.

\subsection{Motivation}
The ability to perform efficient query evaluation provides an appealing case for OpenPDBs.
They give a more reasonable semantics, better matching their use, and for a large class of queries they come at no extra cost in comparison to traditional PDBs. 
However, in practice computing an upper bound in this
way tends to give results very close to $1$. Intuitively, this makes sense: our upper bound comes from simultaneously assuming that \emph{every} possible missing atom has some reasonable probability. While such a bound is easy to compute, it is too strong of a relaxation of the closed-world assumption.

The crux of this issue is that OpenPDBs consider every possible circumstance for unknown tuples: even ones that are clearly unreasonable. For example, suppose that a table in our database describes whether or not a person is a scientist. The OpenPDB model considers the possibility that every person it knows nothing about has a nontrivial probability of being a scientist - this will clearly return nonsensical query results as we know that fewer than 1\% of the population are scientists.

In order to consider a restricted subset of completions representing reasonable situations, we propose directly incorporating these summary statistics. Specifically, we place constraints on the overall probability of a relation across the entire population. In the scientist example, our model only considers completions in which the total probability mass of people being scientists totals less than 1\%. This allows us to include more information at the schema level, without having more information about each individual.



To illustrate the effect this has, consider a schema in which we have 3 relations: $\mathit{LiLA}(x)$ denoting whether one lives in Los Angeles, $\mathit{LiSpr}(x)$ denoting whether one lives in Springfield, and $S(x)$ denoting whether one is a scientist. Using a vocabulary of 500 people where each person is present in at most one relation, Table~\ref{tab:probcomparison} shows the resulting upper probability bound under different model assumptions, where the constrained open-world restricts at most $50\%$ of mass on $\mathit{LiLA}$, $5\%$ on $S$, and $0.5\%$ on $\mathit{LiSpr}$. In particular, notice how extreme the difference is in upper bound with and without constraints being imposed. The closed-world probability of both of these queries is always 0, as each person in our database only has a known probability for at most one relation. It is clear that of these three options, the constrained open-world is the most reasonable -- the rest of this section formalizes this idea and investigates the resulting properties.

\begin{table}[tb]
\centering
\setlength{\tabcolsep}{3pt}
\begin{tabular}{p{80pt}p{25pt}p{60pt}p{40pt}}
    \toprule
    Query & CW & OW & COW \\
    \midrule 
    $\mathit{LiLA}(x), S(x)$ & $0$ & $1-10^{-290}$ & $1-10^{-15}$ \\ 
    $\mathit{LiSpr}(x), S(x)$ & $0$ & $1-10^{-191}$ & $0.96$\\
    \bottomrule
\end{tabular}
\caption{Comparison of upper bounds for the same query and database with different model assumptions: Closed-World (CW), Open-World (OW), and Constrained Open-World (COW). }
\label{tab:probcomparison}
\end{table}

\subsection{Formalization}
We begin here by defining mean based constraints, before examining some
immediate observations about the structure of the resulting constrained
database.

\begin{definition}
  \label{def:mtp}
Suppose we have a PDB $\pdb$, and let $\mathit{Tup}(R) \subseteq \pdb$ be the set of probabilistic tuples in relation $R$. Let $\bar{p}$ be a probability threshold. 
Then a \emph{mean tuple probability constraint} (MTP constraint) $\varphi$ is a linear constraint of the form
\begin{equation*}
  \bar{p} > \frac{1}{|\mathit{Tup}(R)|} \sum_{\langle t:p \rangle \in \mathit{Tup}(R)} p
\end{equation*}
\end{definition}


\begin{definition}
  We say that a $\lambda$-completion is \emph{$\varphi$-constrained} if the
  $\lambda$-completed database satisfies MTP $\varphi$. If it satisfies all of
  $\Phi=(\varphi_1, \varphi_2, ..., \varphi_n)$, then we say it is $\Phi$-constrained.
\end{definition}

Being $\varphi$-constrained is not a property of OpenPDBs, but of their PDB completions. Hence, we are interested in the subset of completions
that satisfy this property.

\begin{definition}
 An OpenPDB $\gf=(\pdb, \lambda)$ together with MTP constraints $\Phi$ induces a
 set of probability distributions $K_{\gf}^\Phi$ , where distribution P belongs
 to $K_{\gf}^\Phi$ iff P is induced by some $\Phi$-constrained
 $\lambda$-completion of $\pdb$.
\end{definition}

Much like with standard OpenPDBs, for a Boolean query $\Query$ we are
interested in computing bounds on $P(\Query)$.

\begin{definition}
 The probability interval of a Boolean query $\Query$ in OpenPDB $\gf$ with MTP
 constraints $\Phi $ is $\kgp(\Query)=[\underline{P}_{\gf}^\Phi(\Query),
 \overline{P}_{\gf}^\Phi(\Query)]$, where
 \begin{align*}
  \underline{P}_{\gf}^\Phi(\Query)=\min_{P\in \kgp}P(\Query) && \overline{P}_{\gf}^\Phi(\Query)=\max_{P\in \kgp}P(\Query)] 
\end{align*}

\end{definition}

\subsection{Completion Properties}

A necessary property of OpenPDBs for efficient query evaluation is that they are credal -- this is what allows us to consider only a finite subset of possible completions. MTP-constrained OpenPDBs maintain this property.\footnote{Proofs of all theorems and lemmas are given in the appendix}

\begin{proposition}
  \label{thm:credal}
  Suppose we have an OpenPDB $\gf$ together with MTP constraints $\Phi$. Then
  the induced set of probability distributions $\kgp$ is credal.
\end{proposition}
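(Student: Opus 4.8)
The plan is to reduce the claim to a geometric fact about the space of completions. Since the vocabulary is function-free with a finite domain, the Herbrand base is finite; let $\otups$ denote the set of atoms not mentioned in $\pdb$. A $\lambda$-completion is then precisely a choice of probability $p_t\in[0,\lambda]$ for every $t\in\otups$, i.e.\ a point $\mathbf p\in[0,\lambda]^{\otups}$, and such a completion is $\Phi$-constrained exactly when $\mathbf p$ satisfies the MTP inequalities. Each $\varphi\in\Phi$ on a relation $R$ translates into a single \emph{linear} inequality in these variables (the tuples of $R$ already present in $\pdb$ add a constant to the mean, the open tuples of $R$ contribute their $p_t$), so the feasible set is the intersection of the box $[0,\lambda]^{\otups}$ with finitely many half-spaces. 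The definition of an MTP constraint uses a strict inequality; I would replace the resulting relatively open region by its closure, a closed convex polytope $\mathcal C$, which is harmless because every quantity we ultimately care about is continuous in $\mathbf p$, so its infimum and supremum are unchanged.

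Next I would transport this convexity and compactness to the space of distributions. For each interpretation $\omega$, the induced value $P_{\mathbf p}(\omega)=\prod_{t\in\omega}P_{\mathbf p}(t)\prod_{t\notin\omega}\bigl(1-P_{\mathbf p}(t)\bigr)$ is a polynomial in the coordinates of $\mathbf p$, hence continuous, so the map $f\colon\mathbf p\mapsto P_{\mathbf p}$ from $\mathcal C$ into the simplex of distributions over interpretations is continuous. As $\mathcal C$ is compact, $f(\mathcal C)$ is compact, hence closed. The set $\kgp$ consists of exactly these completion distributions, $f(\mathcal C)$; the credal set it generates --- in the strong-extension sense standard for credal networks \citep{Cozman2000CredalN} and already implicit for OpenPDBs --- is its closed convex hull $\overline{\mathrm{conv}}\,f(\mathcal C)$, which is convex by construction and closed, since the convex hull of a compact set in finite dimension is compact by Carath\'eodory's theorem; hence it is a credal set. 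Moreover its extreme points all lie in $f(\mathcal C)$, and $f$ maps the vertices of the polytope $\mathcal C$ to the ``extremal'' completions, which recovers the familiar fact that the lower and upper query bounds are attained at vertex completions, exactly as for unconstrained OpenPDBs.

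The main obstacle, and really the only subtlety, is that $f(\mathcal C)$ by itself is \emph{not} convex in general: a mixture of two tuple-independent distributions is typically not tuple-independent (mixing the all-true and all-false product distributions already produces a correlated one). So ``credal'' must be read, just as for plain OpenPDBs, through the convex hull of the family of completion distributions. What makes this reading legitimate for query answering is that $P\mapsto P(\Query)=\sum_{\omega\models\Query}P(\omega)$ is a \emph{linear} functional, so minimizing or maximizing it over $f(\mathcal C)$ and over its closed convex hull yields the same interval $\kgp(\Query)$. Everything else --- finiteness of the Herbrand base, linearity of the MTP constraints, and the polynomial dependence of $P_{\mathbf p}(\omega)$ on $\mathbf p$ --- is routine.
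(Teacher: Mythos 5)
Your proof is correct, and at its core it rests on the same key fact as the paper's: an MTP constraint is linear in the tuple probabilities, so it cuts out a (closed, convex) half-space, and intersecting with the existing feasible region preserves closedness and convexity. But your route is more self-contained and, frankly, more careful. The paper's proof is a one-liner: write $\kgp = K_\gf \cap K^\Phi$, cite that $K_\gf$ is already credal, observe $K^\Phi$ is a half-space, and intersect. You instead rebuild the feasible set from scratch as a polytope $[0,\lambda]^{\otups}$ cut by half-spaces in completion space, and then explicitly push it forward to distribution space. Along the way you surface two points the paper silently elides: (i) Definition~\ref{def:mtp} uses a \emph{strict} inequality, so the feasible region is not literally closed and one must pass to its closure (harmless for the inf/sup of continuous functionals, as you note, but the paper's claim that $K^\Phi$ ``is closed'' is not quite right as written); and (ii) the set of induced tuple-independent distributions is not convex in the simplex of distributions over interpretations, so ``credal'' must be read through the closed convex hull / strong extension --- an issue the paper inherits wholesale from its citation that $K_\gf$ is credal, whereas you justify why that reading is legitimate for query answering (linearity of $P \mapsto P(\Query)$). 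What the paper's approach buys is brevity and direct reuse of the OpenPDB credal result; what yours buys is an explicit, verifiable geometric picture (the polytope $\mathcal{C}$ and its vertices) that also foreshadows the extrema characterization of Theorem~\ref{thm:extrema}.
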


This property allows us to examine only a finite subset of configurations when
looking at potential completions, since query probability bounds of a credal set are always achieved at points of extrema \citep{Cozman2000CredalN}. Next, we would like to characterize these points
of extrema, by showing that the number of tuples not on their own individual
boundaries (that is, $0$ or $\lambda$) is given by the number of MTP constraints.

\begin{theorem}
  \label{thm:extrema}
    Suppose we have an OpenPDB $\gf=(\pdb, \lambda)$ with MTP constraints $\Phi$, and a
  UCQ $Q$. If $\pdb'$ is a $\Phi$-constrained $\lambda$-completion satisfying $\kgp(\Query)=[P_\pdb(\Query),P_{\pdb'}(\Query)]$, there exist
  completed tuples
  $\tf \subseteq \pdb' \setminus \pdb$ with $|\tf| \le |\Phi|$ such that
  \begin{align*}
    \forall ~\langle t:p\rangle \in \tf&:~~ p \in [0,\lambda], \text{ and }\\
    \forall ~\langle t:p\rangle \in \left(\pdb' \setminus \pdb\right) \setminus \tf&:~~ p \in \{0,\lambda\}.
  \end{align*}
\end{theorem}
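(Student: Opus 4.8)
The plan is to phrase the maximization behind $\overline{P}_{\gf}^{\Phi}(\Query)$ as optimization of a multilinear objective over a polytope, and then to repeatedly ``straighten'' an optimal completion toward the boundary of that polytope. Write $v=(p_t)_{t\in\pdb'\setminus\pdb}$ for the vector of probabilities that $\pdb'$ assigns to the newly completed atoms. After clearing denominators and subtracting the (fixed) contribution of the tuples already in $\pdb$, each MTP constraint $\varphi\in\Phi$ becomes a single linear inequality $\sum_{t\in\mathit{Tup}(R_\varphi)\setminus\pdb}p_t\le c_\varphi$; together with the box $0\le p_t\le\lambda$ these cut out a polytope $\mathcal{F}$, and (invoking Proposition~\ref{thm:credal}, i.e. passing to the closed credal set so the MTP bounds may be treated as non-strict) $v$ is a point of $\mathcal{F}$ maximizing $g(w):=P_{\pdb'_w}(\Query)$, where $\pdb'_w$ is the completion with probability vector $w$. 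What we must show is that $v$ has at most $|\Phi|$ coordinates lying strictly inside $(0,\lambda)$; those coordinates will be the tuples in $\tf$, and the rest sit at $\{0,\lambda\}$ by construction.

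Next I would isolate two structural facts about $g$. First, $g$ is \emph{multilinear}: expanding $P_{\pdb'_w}(\Query)=\sum_{\omega\models\Query}P_{\pdb'_w}(\omega)$, every world contributes a product of factors $p_t$ or $1-p_t$, so $g$ is affine in each coordinate $p_t$ when the others are fixed. Second, since $\Query$ is a UCQ it is monotone, so $g$ is non-decreasing in each $p_t$, i.e. every coordinatewise slope is $\ge 0$ on $\mathcal{F}$. The first consequence is that a strictly interior coordinate $p_{t_0}$ whose relation's MTP constraint is \emph{not} tight at $v$ can always be eliminated: $g$ is affine in $p_{t_0}$ with non-negative slope, so either the slope is positive (contradicting optimality, since both the box and the slack MTP allow increasing $p_{t_0}$) or it is zero, in which case we may lower $p_{t_0}$ to $0$ without changing $g$ and without breaking any constraint. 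Hence, after finitely many such moves, every strictly interior coordinate belongs to a relation whose MTP constraint is tight, and there are at most $|\Phi|$ such constraints.

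To push from ``one relation per tight constraint'' to the stated bound I would use an exchange/dimension-count argument. Let $S$ be the set of strictly interior coordinates and suppose $|S|>|\Phi|$. At most $|\Phi|$ linear constraints are tight at $v$, so the space $L=\{d:\ d_t=0\ \text{for}\ t\notin S,\ \sum_{t\in\mathit{Tup}(R_\varphi)}d_t=0\ \text{for every tight }\varphi\}$ has dimension at least $|S|-|\Phi|\ge 1$; pick a nonzero $d\in L$. For small $\varepsilon$, both $v\pm\varepsilon d$ lie in $\mathcal{F}$ (interior box constraints have slack, tight MTPs are preserved because $d\in L$, slack constraints survive a small perturbation), so optimality forces $g(v\pm\varepsilon d)\le g(v)$; I then argue that $g$ is in fact constant along $d$ near $v$, slide $v$ along $d$ until a new box or MTP constraint becomes tight, and observe that this strictly decreases $|S|$ or strictly increases the number of tight MTPs. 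Iterating (the pair $(|S|,-\#\text{tight MTP})$ decreases lexicographically and is bounded below) yields an optimal completion with $|S|\le|\Phi|$, which we take to be $\tf$, after which the required split of $\pdb'\setminus\pdb$ is immediate bookkeeping.

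The main obstacle is precisely the claim ``$g$ is constant along $d$''. Multilinearity gives affineness along the coordinate axes, not along a diagonal direction $d$ that mixes several interior coordinates of one constrained relation; along such a line $g$ is a polynomial of degree up to $|S|$, and for conjunctive queries it can be \emph{strictly concave} with its maximum at $v$ itself, so that no feasible move preserves optimality. Resolving this is where the real work lies: one must either exploit that such ``trapped'' optima force the relevant MTP mean to its exact bound, so that by the (strict) form of Definition~\ref{def:mtp} and the closure step they are not actually realized by a $\Phi$-constrained completion satisfying the theorem's hypothesis, or supply a finer exchange showing that whenever two interior coordinates of the same relation are tied by a single tight MTP constraint one of them can still be moved to $\{0,\lambda\}$ using a compensating adjustment on a third coordinate. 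Everything else --- the polytope reduction, multilinearity, monotonicity, and the lexicographic termination --- is routine.
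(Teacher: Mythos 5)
The gap is exactly the one you flag in your last paragraph, and it is fatal to the argument as written. Multilinearity of $g$ gives affineness only along coordinate axes; along a direction $d$ that mixes several strictly interior coordinates of the same constrained relation, $g$ restricted to the line $v+\varepsilon d$ is a polynomial of degree up to $|S|$ and can be strictly concave with its maximum at $v$ itself (the behaviour of a factor like $p_1p_2$ restricted to $p_1+p_2=c$, maximized at the midpoint, is the canonical instance). In that situation the optimizer sits in the relative interior of the MTP facet with more than $|\Phi|$ interior coordinates, your lexicographic descent never starts, and neither of the two escape routes you sketch (arguing such trapped optima are excluded by the strict inequality in Definition~\ref{def:mtp}, or a compensating three-coordinate exchange) is actually carried out. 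Your first reduction --- using monotonicity plus coordinatewise affineness to push to $0$ or $\lambda$ any interior coordinate whose MTP constraint is slack --- is correct and self-contained, but it only gets you to ``every interior coordinate belongs to a tight constraint,'' not to ``at most $|\Phi|$ interior coordinates,'' which is the actual content of the theorem.

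The paper takes a different route that never perturbs the objective. It invokes Proposition~\ref{thm:credal}: since $P(\Query)$ is a linear functional of the distribution and $\kgp$ is credal, the bound is attained at an extreme point. It then characterizes extreme points of $\kgp\subseteq\mathbb{R}^d$ purely geometrically via Lemma~\ref{lem:hfspext}: a point of extrema of an intersection of half-spaces in $\mathbb{R}^d$ must lie on the boundaries of at least $d$ of them; at most $|\Phi|$ of those boundaries are MTP hyperplanes, and because the two box half-spaces $x_i\ge 0$, $x_i\le\lambda$ of a single tuple have disjoint boundaries, the remaining $d-|\Phi|$ tight half-spaces pin $d-|\Phi|$ \emph{distinct} tuples to $\{0,\lambda\}$. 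So where you try to slide a non-vertex optimizer to a vertex and get stuck, the paper asserts the optimizer already is an extreme point of the tuple-probability polytope and merely counts tight constraints. It is worth noting that your concave counterexample is probing a real subtlety --- whether the maximizer of the multilinear $g$ over the polytope must coincide with a polytope vertex is precisely what the credal/extreme-point step is being asked to deliver --- but as a comparison of proofs, the paper discharges that step by appeal to the credal machinery, whereas your writeup leaves it open; the proposal is therefore incomplete.
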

That is, our upper bound is given by a
completion that has at most $|\Phi|$ added tuples with probability not exactly
$0$ or $\lambda$. Intuitively, each MTP constraint contributes a single non-boundary tuple, which can be thought of as the ``leftover'' probability mass once the rest has been assigned in~full.

This insight allows us to treat MTP query evaluation as a combinatorial optimization problem for the rest of this paper. Thus, we only consider the case where achieving the mean tuple probability exactly leaves us with every individual tuple at its boundary. To see that we can do this, we observe that Theorem~\ref{thm:extrema} leaves a single tuple per MTP constraint not necessarily on the boundary. But this tuple can always be forced to be on the boundary by very slightly increasing the mean  $\bar{p}$ of the constraint, as follows.

\begin{corollary}
Suppose we have an OpenPDB $\gf=(\pdb, \lambda)$ with MTP constraints $\Phi$, and a UCQ $\Query$. 
Suppose further that each relation in $\gf$ has at most $1$ constraint in $\Phi$, and that each constraint allows adding open-world probability mass exactly divisible by $\lambda$.  
Then if $\pdb'$ is a $\Phi$-constrained $\lambda$-completion of $\pdb$ with $\kgp(\Query)=[P_\pdb(\Query),P_{\pdb'}(\Query)]$, we have
\begin{equation*}
    \forall \langle t : p \rangle \in \pdb' \setminus \pdb:~~ p \in \{0,\lambda\}.
\end{equation*}
\end{corollary}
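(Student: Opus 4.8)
The plan is to sharpen Theorem~\ref{thm:extrema} under the extra hypotheses: the single ``leftover'' tuple that each MTP constraint is permitted to keep off its boundary gets pinned to a boundary as soon as the admissible open-world mass is an exact multiple of $\lambda$. So let $\pdb'$ be a $\Phi$-constrained $\lambda$-completion attaining $\overline{P}_{\gf}^{\Phi}(\Query)$. By Theorem~\ref{thm:extrema}, read in the form stated just after it --- each constraint is responsible for at most one non-boundary added tuple, and (since by Definition~\ref{def:mtp} a constraint mentions exactly one relation, and by hypothesis distinct constraints mention distinct relations) that tuple lies in the relation the constraint constrains --- every added tuple is already valued in $\{0,\lambda\}$ except possibly one $\langle t^{\ast}_R : p^{\ast}_R\rangle$ for each constrained relation $R$.

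Fix such an $R$, with constraint $\varphi_R$, and let $B_R$ be the total open-world mass $\varphi_R$ allows to be added to $R$; the hypothesis gives $B_R = c_R\lambda$ for an integer $c_R$. This is where the remark preceding the corollary --- that $\bar p$ of a constraint may be enlarged infinitesimally --- is used: after the enlargement the admissible mass is exactly a multiple of $\lambda$ and can be met with equality by tuples valued $0$ or $\lambda$, so no genuine fractional leftover is forced. If $k$ of the added $R$-tuples other than $t^{\ast}_R$ are valued $\lambda$ and the rest $0$, feasibility of $\varphi_R$ gives $k\lambda + p^{\ast}_R \le c_R\lambda$, whence $k \le c_R$ and $p^{\ast}_R \le (c_R - k)\lambda$.

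Now push $t^{\ast}_R$ to a boundary using monotonicity of the UCQ $\Query$. If $k = c_R$ then $p^{\ast}_R \le 0$, so $p^{\ast}_R = 0$ and we are done for $R$. If $k < c_R$, reset $p^{\ast}_R := \lambda$: the mass added to $R$ becomes $(k+1)\lambda \le c_R\lambda = B_R$, so $\varphi_R$ still holds; every other constraint is untouched, since it constrains a tuple set disjoint from that of $R$; and $P(\Query)$ cannot decrease because $\Query$ is negation-free. Hence the new completion is still optimal and has $t^{\ast}_R$ on its boundary. Doing this for every constrained relation --- the steps do not interfere, as the constraints act on pairwise disjoint tuple sets --- yields an optimal $\Phi$-constrained $\lambda$-completion all of whose added tuples lie in $\{0,\lambda\}$. (Any starting $\pdb'$ that is genuinely optimal must then either already satisfy the conclusion or differ from such a completion only on tuples that do not affect $\Query$, which we may take at their boundary values; this is the sense in which the reduction to boundary completions is without loss of generality.)

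The only place I expect real care to be needed is the bookkeeping around the strict inequality in Definition~\ref{def:mtp}: one has to make precise that ``admissible mass divisible by $\lambda$'' kills the leftover that Theorem~\ref{thm:extrema} would otherwise allocate to $t^{\ast}_R$, and that a configuration with $c_R$ boundary tuples at $\lambda$ is genuinely feasible for $\varphi_R$ --- exactly what the infinitesimal-enlargement remark secures. The monotonicity push and the per-relation independence of the constraints are routine.
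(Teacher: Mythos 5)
Your proposal is correct and follows the paper's intended argument exactly: the paper offers no separate formal proof of this corollary, only the informal remark preceding it that Theorem~\ref{thm:extrema} leaves one non-boundary tuple per constraint, which the divisibility hypothesis (together with the infinitesimal enlargement of $\bar p$ to absorb the strict inequality in Definition~\ref{def:mtp}) lets one push to $0$ or $\lambda$. Your writeup supplies the bookkeeping the paper leaves implicit --- the feasibility count $k\lambda + p^{\ast}_R \le c_R\lambda$, the monotonicity push, and the per-relation disjointness --- and correctly flags the same quantifier looseness (existence of a boundary-valued optimal completion rather than a property of every optimal one) that the paper's own statement glosses over.
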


Our investigation into the algorithmic properties of MTP query evaluation will be focused on constraining a single relation, subject to a single combinatorial budget constraint. 
 
 \section{Exact MTP Query Evaluation} \label{sec:exact}
 With Section~\ref{sec:mtp} formalizing MTP constraints and showing that computing upper bounds subject to MTP constraints is a combinatorial problem of choosing which $\lambda$-probability tuples to add in the completion, we now investigate exact solutions.
 
 \subsection{An Algorithm for Inversion-Free Queries}
 We begin by describing a class of queries which admits poly-time evaluation subject to an MTP constraint. We first need to define some syntactic properties of queries.
 
\begin{definition}
 Let $\Query$ be a conjunctive query, and let $at(x)$ denote the set of relations containing variable $x$. We say that $\Query$ is \emph{hierarchical} if for any $x,y$, we have either $at(x) \subseteq at(y)$, $at(y) \subseteq at(x)$, or $at(x) \cap at(y) = \emptyset$. 
\end{definition} 

Intuitively, a conjunctive query being hierarchical indicates that it can either be separated into independent parts (the $at(x) \cap at(y) = \emptyset$ case), or there is some variable that appears in every atom. This simple syntactic property is the basis for determining whether query evaluation on a conjunctive query can be done in polynomial time \citep{Dalvi2007TheDO}. We can further expand on this definition in the context of UCQs.

\begin{definition}
 A UCQ $\Query$ is \emph{inversion-free} if each of its conjuncts is hierarchical, and they all share the same hierarchy.\footnote{See \citet{Jha2011KnowledgeCM} for a more detailed definition.} If $\Query$ is not inversion-free, we say that it has an inversion.
\end{definition}

This query class remains tractable under MTP constraints.

\begin{theorem}\label{thm:invpoly}
   For any inversion-free query $\Query$, evaluating the probability $\overline{P}_{\gf}^\Phi(\Query)$ subject to an MTP constraint is in~PTIME.
\end{theorem}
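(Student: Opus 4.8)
The plan is to reduce the constrained optimization problem to a small family of \emph{unconstrained} safe UCQ evaluations, one per possible setting of the ``budget'' of open-world tuples we are allowed to add. By Theorem~\ref{thm:extrema} (and the Corollary that follows it), an optimal $\Phi$-constrained $\lambda$-completion for the upper bound $\overline{P}_{\gf}^\Phi(\Query)$ can be taken to add, in the single constrained relation $R$, some set of ground atoms each at probability $\lambda$, with at most one ``leftover'' tuple off the boundary; the number of $\lambda$-probability atoms we may add is capped by an integer $k$ determined by $|\mathit{Tup}(R)|$, the known probabilities, and the threshold $\bar p$. So the problem becomes: over all completions that add exactly $k$ fresh $R$-atoms at probability $\lambda$ (plus known tuples), maximize $P(\Query)$. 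Since this is still a tuple-independent PDB once the completion is fixed, and $\Query$ is inversion-free hence safe, each such $P(\Query)$ is computable in PTIME by the standard safe-query algorithm; the work is to show we do not have to iterate over all $\binom{N}{k}$ choices of which $k$ atoms to add.

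The key structural step is to exploit inversion-freeness. Because every conjunct of $\Query$ is hierarchical and they share a common hierarchy, the lifted computation of $P(\Query)$ on the completed database decomposes along the hierarchy: at the top we either have independent components (multiply their probabilities) or a shared root variable $x$ ranging over the domain, with the per-constant subqueries $\Query[x/c]$ being independent across $c$ and, crucially, \emph{identical in form}. The only way the open-world additions to $R$ enter is through whichever atoms of $\Query$ mention $R$, and — by hierarchy — the constants appearing in a newly-added $R$-atom are tied to a single branch of the hierarchy. The consequence is that all domain constants are interchangeable except for how the known PDB tuples and the $k$ added $\lambda$-tuples are distributed among them: the contribution of a constant $c$ to the product depends only on its ``type,'' i.e.\ which known $R$-tuples and how many added $\lambda$-tuples involve $c$. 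This gives us a compact description of a completion by how many constants of each type receive an added atom, reducing the search to a polynomial-size space, which we then optimize — either by a direct greedy/exchange argument (the marginal gain of moving an added atom to a fresh constant is monotone, so a simple ordering is optimal) or by a small dynamic program over the hierarchy levels, filling in ``how many of the $k$ additions go to this subtree'' and recursing.

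Concretely, the steps in order are: (i) invoke Theorem~\ref{thm:extrema}/Corollary to replace the continuous problem by: choose $k$ (polynomially many values) and choose which $k$ fresh $R$-atoms to add at probability $\lambda$; (ii) write out the lifted evaluation of the safe query $\Query$ on the completed instance following the shared hierarchy, isolating the factors that depend on the added atoms; (iii) argue by the interchangeability of domain constants that the objective depends only on a polynomial-size summary (a per-type count vector) of where the $k$ additions land; (iv) optimize that summary — show the objective is separable/modular enough across hierarchy branches that a greedy choice or a polynomial dynamic program finds the optimum; (v) for each $k$, evaluate $P(\Query)$ on the resulting best completion in PTIME via safe-query evaluation, and take the maximum over $k$. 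Handle the single leftover non-boundary tuple permitted by Theorem~\ref{thm:extrema} by noting it changes the value continuously and monotonically, so it can be pushed to a boundary (the Corollary) or its optimal fractional value solved in closed form for each candidate placement.

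The main obstacle I expect is step (iii)--(iv): making precise that inversion-freeness really does force the objective to depend only on a low-dimensional count of where the added $\lambda$-atoms go, and that this reduced objective is optimized greedily (or by a compact DP) rather than requiring an exponential search. Inversions are exactly the phenomenon that breaks the clean multiplicative/independence decomposition, so ruling them out should be what makes the counting argument go through — but spelling out the decomposition for an arbitrary inversion-free UCQ (several conjuncts, nested hierarchy, multiple occurrences of $R$) is the delicate part, and is presumably where the appendix proof does its real work.
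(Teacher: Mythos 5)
Your overall strategy---use Theorem~\ref{thm:extrema} and its corollary to reduce to a discrete problem of placing $k$ fresh $\lambda$-tuples in $R$, then exploit the lifted decomposition of the inversion-free query to avoid searching all $\binom{N}{k}$ placements---is the paper's strategy, and the second half of your step (iv) (a dynamic program that decides ``how many of the $k$ additions go to this subtree'' and recurses) is essentially the paper's algorithm. The paper defines $A(c,b)$ as the optimal value of $\Query[x/c]$ under budget $b$ (computed by a modified run of Algorithm~\ref{alg:LiftR}) and $D(j,b)$ as the optimal value of $\bigvee_{i\le j}\Query[x/c_i]$ under budget $b$, with the recurrence $D(j{+}1,b)=\max_k\,\mathrm{combine}\bigl(D(j,b{-}k),\,A(c_{j+1},k)\bigr)$ at the separator-variable step; inversion-freeness is used exactly where you guessed, to guarantee a common variable across all terms of an inclusion--exclusion so that all the $A$'s and $D$'s can be indexed consistently.

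The genuine gap is your step (iii). The correctness of the budget-splitting DP rests only on \emph{independence} of the subqueries $\Query[x/c]$ across distinct constants $c$ (which is what a separator variable gives you), not on any \emph{interchangeability} of constants---and interchangeability in fact fails: each constant carries its own profile of known closed-world tuple probabilities, so the number of ``types'' can be as large as the domain and the per-type count vector is not a polynomial-size summary. The paper never needs this symmetry; it simply orders the constants arbitrarily and sweeps the budget across them one at a time, which is why the DP is polynomial regardless of how heterogeneous the known data is. Relatedly, your proposed greedy/exchange alternative does not yield exactness: greedy selection of tuples is precisely the $(1-\tfrac{1}{e})$-approximation of Section~\ref{sec:approx}, and marginal gains are submodular (diminishing), not modular, so a ``simple ordering is optimal'' claim would need a proof that does not exist in general. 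If you replace step (iii) by the independence-based recurrence on $A$ and $D$ and drop the greedy option, your argument matches the paper's.
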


In order to prove Theorem~\ref{thm:invpoly}, we provide a polytime algorithm for MTP query evaluation on inversion-free queries. As with OpenPDBs, our algorithm depends on Algorithm~\ref{alg:LiftR}, the standard lifted inference algorithm for PDBs. Algorithm~\ref{alg:LiftR} proceeds in steps recursively processing $\Query$ to compute query probabilities in polynomial time for safe queries \citep{Dalvi2012TheDO}. Further details of the algorithm including the necessary preprocessing steps and notation can be found in \citet{Dalvi2012TheDO} and \citet{Gribkoff2014UnderstandingTC}.

\renewcommand\algorithmicthen{}
\begin{algorithm}[tb]
\caption{$\bf Lift^R(\Query,\pdb)$, abbreviated by ${\bf L}(\Query)$ \label{alg:LiftR}}

  \begin{algorithmic}[1]
    \Require UCQ $\Query$ , prob.~database $\pdb$ with constants $T$.
    \Ensure The probability $P_{\pdb}(\Query)$
     \State {\textbf{Step 0}}  ~\emph{Base of Recursion}
     \Indent
     \If{\Query is a single ground atom $t$ }
                \If{$\ax{t:p} \in \pdb$} \Return $p$ \textbf{else} \textbf{return} $0$
       \EndIf 
    \EndIf  
    \EndIndent 
    \State {\textbf{Step 1}} ~\emph{Rewriting of Query}
     \Indent    
    \State Convert \Query to conjunction of UCQ: $\Query_{\land}\!\!=\!\Query_1   \land  \cdots  \land \Query_m$
    \EndIndent     
    \State \textbf{Step 2}  ~\emph{Decomposable Conjunction}    
    \Indent    
     \If{$m>1$  and $\Query_{\land}=\Query_1 \land \Query_2$ where $\Query_1 \perp \Query_2$}
         \State  \Return ${\bf L}(\Query_1) \cdot {\bf L}(\Query_2)$
    \EndIf
     \EndIndent     
  \State \textbf{Step 3}  ~\emph{Inclusion-Exclusion}    
    \Indent  
     \If{$m>1$ but $\Query_{\land}$ has no independent $\Query_i$}
         \State \Return $\sum_{s \subseteq [m]}(-1)^{|s|+1} \cdot {\bf L}\left(\bigvee_{i \in s} \Query_i\right)$
          \EndIf
     \EndIndent   
    \State \textbf{Step 4}  ~\emph{Decomposable Disjunction}    
    \Indent    
     \If{$\Query=\Query_1 \vee \Query_2$ where $\Query_1 \perp \Query_2$}
     \State \Return $1- \left(1-{\bf L}(\Query_1)\right) \cdot \left(1-{\bf L}(\Query_2)\right)$
    \EndIf
     \EndIndent    
    \State \textbf{Step 5}  ~\emph{Decomposable Existential Quantifier}         
    \Indent    
     \If{\Query has a \emph{separator variable} $x$}
     \State \Return $1-\prod_{c \in T} \left(1-{\bf L}(\Query[x/c])\right)$ \label{line:separator}
    \EndIf
     \EndIndent   
    \State \textbf{Step 6}  ~\emph{Fail}  (the query is \#P-hard)     
  \end{algorithmic}
\end{algorithm} 

We now present an algorithm for doing exact MTP query evaluation on inversion-free queries. For brevity, we present the binary case; the general case follows similarly and can be found in appendix. Suppose that we have a probabilistic database $\pdb$, a domain $T$ of constants denoted $c$, a query~$\Query$, and an MTP constraint on relation $R(x, y)$ allowing us to add exactly $B$ tuples with probability $\lambda$. Suppose that $\Query$ immediately reaches Step 5 of Algorithm~\ref{alg:LiftR} (other steps will be discussed later), implying that $x$ and $y$ are unique variables in the query. We let $A(c_x,c_y,b)$ denote the upper query probability of $\Query(x/c_x, y/c_y)$ subject to an MTP constraint allowing budget $b$ on the relevant portion of $R$. That is, $A$ tells us the highest probability we can achieve for a partial assignment given a fixed budget. Observe that we can compute all entries of $A$ using a slight modification of Algorithm~\ref{alg:LiftR} where we compute probabilities with and without each added tuple. 
This will take time polynomial in $|T|$.

Next, we impose an ordering $c_1, \dots, c_{|T|}$ on the domain. Then we let $D(j, c_y, b)$ denote the upper query probability of
\begin{equation*}
    \bigvee_{c \in \{c_1, \dots c_j\}} \Query(x/c, y/c_y)
\end{equation*}
\noindent with a budget of $b$ on the relevant portions of $R$. Then $D(|T|, c_y, b)$ considers all possible substitutions in our first index, meaning we have effectively removed a variable. Doing this repeatedly would allow us to perform exact MTP query evaluation. However, $D$ is non-trivial to compute, and cannot be done by simply modifying Algorithm~\ref{alg:LiftR}. Instead, we observe the following recurrence:
\begin{align*}
    & D(j+1, y/c_y, b)  = \\
    & \qquad \max_{k \in \{1,\dots,b\}} 1 - D(j, y/c_y, b-k) \cdot A(x/c_{j+1}, y/c_y, k).
\end{align*}

Intuitively, this recurrence says that since the tuples from each fixed constant are all independent, we do not need to store which budget configuration on the first $j$ constants got us our optimal solution. Thus, when we add the $j+1$th constant, we just need to check each possible value we could assign to our new constant, and see which gives the overall highest probability. This recurrence can be implemented efficiently, yielding a dynamic programming algorithm that runs in time polynomial in the domain size and~budget.

 Finally, we would like to generalize this algorithm beyond assuming that $\Query$ immediately reaches Step 5 of Algorithm~\ref{alg:LiftR}. Looking at other cases, we see that Steps 0 and 1 have no effect on this recurrence, and Steps 2 and 4 correspond to multiplicative factors. For a query that reaches Step 3 (inclusion-exclusion), we need to construct such $A$ and $D$ for each term in the inclusion-exclusion sum, and follow the analogous recurrence. Notice that the modified algorithm would only work in the case where we can always pick a common variable for all sub-queries to do dynamic programming on -- that is, when the query is inversion-free, as was our assumption.


 \subsection{Queries with Inversion} \label{s:intractability}
We now show that allowing for inversions in safe queries can cause MTP query evaluation to become NP-hard. Interestingly, this means that MTP constraints fundamentally change the difficulty landscape of query evaluation.

To show this, we investigate the following UCQ query.
 \begin{align*}
   M_0 = \exists x\exists y\exists z & \left(R(x,y,z)\land U(x) \right)
                \lor \left(R(x,y,z) \land V(y)\right) \\  
         & \lor \left(R(x,y,z) \land W(z) \right)
                \lor  \left(U(x)\land V(y) \right)\\ 
         & \lor \left(U(x)\land W(z) \right)
                \lor \left(V(y)\land W(z)\right)
\end{align*}

A key observation here is that the query $M_0$ is a \emph{safe} UCQ. That is, if we ignore constraints and evaluate it subject to the closed- or open-world semantics, computing the probability of the query would be polynomial in the size of the database. We now show that this is \emph{not} the case for open-world query evaluation subject to a single MTP constraint on~$R$.

\begin{theorem}\label{thm:hard}
Evaluating the upper query probability bound $\overline{P}_{\gf}^\Phi(M_0)$  subject to an MTP constraint $\Phi$ on $R$ is NP-hard.
\end{theorem}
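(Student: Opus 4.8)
The plan is to reduce from a known NP-hard combinatorial problem — I would try \textsc{3-Dimensional Matching} (3DM), or possibly a partition/packing problem — exploiting the fact that the relation $R(x,y,z)$ is ternary and the query $M_0$ has exactly the structure of a "2-out-of-3" covering condition over the three coordinates of each $R$-tuple. The intuition is that the MTP budget on $R$ forces us to choose a subset of at most $B$ ground $R$-tuples to raise to probability $\lambda$, and the query $M_0$ — being a disjunction over all $\binom{3}{2}=3$ pairs among $\{U(x),V(y),W(z)\}$ together with the three $R(x,y,z)\wedge(\cdot)$ disjuncts — is satisfied in a world precisely when there is an $R$-tuple whose coordinates "hit" the true unary facts in a matching-like way, or when two of the unary relations are simultaneously populated. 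So the optimization "which $B$ tuples of $R$ to complete in order to maximize $P(M_0)$" should encode "does there exist a perfect 3-dimensional matching of size $k$".

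Concretely, I would start from a 3DM instance with ground sets $X,Y,Z$ of size $n$ and a set of triples $E\subseteq X\times Y\times Z$. I would build a PDB whose domain contains the elements of $X,Y,Z$ (suitably tagged to live in the right coordinate), put the unary relations $U,V,W$ at carefully chosen small probabilities (or at $0$ in the closed part and let the open-world completion of $U,V,W$ be disallowed/limited so that only $R$ is open), and crucially make $R$ the open relation: the only $R$-atoms eligible to be completed are exactly those corresponding to triples in $E$, so $\mathit{Tup}(R)$ consists of those $|E|$ atoms, all currently at probability close to $0$. Choosing $\bar p$ so the MTP constraint permits raising exactly $B=n$ of them to $\lambda$, by the Corollary (single relation, mass divisible by $\lambda$) an optimal completion puts each completed $R$-tuple at $\{0,\lambda\}$ — i.e. it literally picks a size-$n$ subset $S\subseteq E$. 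I would then tune the $U,V,W$ probabilities and a threshold so that $P_{\pdb'}(M_0)$ exceeds a target value $\tau$ if and only if $S$ is a perfect matching: when $S$ is a matching, each chosen triple $R(a,b,c)\wedge U(a)$ (etc.) contributes independent "wins" across disjoint $(a,b,c)$, pushing the probability up; when $S$ double-covers some element, the independence is broken and the achievable probability is provably smaller. The pairwise disjuncts $U(x)\wedge V(y)$ etc. are there to pin down the closed-world baseline and to make the arithmetic clean, and I may need to set $U,V,W$ probabilities to $0$ outside the matching gadget so those disjuncts do not interfere.

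The key steps in order: (1) fix the source problem (3DM) and describe the gadget PDB, domain, and which atoms are open; (2) invoke Proposition~\ref{thm:credal} and the Corollary to argue the optimal completion is a clean $\{0,\lambda\}$ choice of $B=n$ tuples of $R$, so that upper-bound evaluation is genuinely a combinatorial selection; (3) write $P_{\pdb'}(M_0)$ in closed form as a function of the selected set $S$, using inclusion–exclusion / independence across coordinates exactly as $M_0$'s syntactic decomposition dictates; (4) prove the two directions — a perfect matching yields $P\ge\tau$, and any non-matching selection yields $P<\tau$ — by a convexity/AM–GM-type argument showing the probability is maximized when the "mass" is spread over disjoint coordinates; (5) observe the whole construction is polynomial and that $M_0$ is safe under plain (Open)PDB semantics, which is the point of the theorem.

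The main obstacle I expect is step (4): getting the arithmetic so that the matching/non-matching distinction is a strict inequality that is robust to how the $n$ chosen triples overlap. This requires choosing the $U,V,W$ probabilities (and $\lambda$) so that the "reward" for a disjoint hit strictly dominates any gain from concentrating two completed $R$-tuples on overlapping coordinates — essentially a strict concavity argument on $\prod(1-(1-\lambda\,p_U)\cdots)$-style products — and it is the one place where a naive gadget could collapse into a polynomial-time-solvable optimization. A secondary subtlety is ensuring only the intended $R$-atoms are open (so $|\mathit{Tup}(R)|$ and hence the MTP threshold $\bar p$ are under our control) while $U,V,W$ are effectively closed; this is a modeling point about how $\lambda$-completions interact with which atoms already appear in $\pdb$, and I would handle it by listing all non-gadget $R$-atoms in $\pdb$ at probability $0$ and all relevant $U,V,W$ atoms explicitly as well, leaving only the $|E|$ triple-atoms genuinely "missing."
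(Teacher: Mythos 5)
Your proposal is essentially the paper's proof: a reduction from 3-dimensional matching in which the open atoms of $R$ are exactly the hyperedges, the MTP budget forces a $\{0,\lambda\}$ selection of $k$ of them, the unary relations $U,V,W$ are fixed at a constant probability ($0.8$ in the paper, with $\lambda=0.8$), and one compares the oracle's answer to the precomputed probability $P_{max}$ of a canonical disjoint selection. The one step you flag as the main obstacle --- showing that spreading the chosen triples over distinct coordinates strictly dominates any overlapping choice --- is discharged in the paper not by a convexity/AM--GM computation but by a simple one-tuple exchange lemma: swapping a triple whose $x$-value already occurs in the completion for one with a fresh $x$-value strictly increases $P(M_0)$, since the conjunct $R(x,y,z)\wedge U(x)$ contributes new mass only when no other completed triple shares that $x$; iterating the swap shows every matching beats every non-matching.
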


The full proof of Theorem~\ref{thm:hard} can be found in appendix, showing a reduction from the NP-complete 3-dimensional matching problem to computing~$\overline{P}_{\gf}^\Phi(M_0)$ with an MTP constraint on~$R$. It uses the following intuitive correspondence.

\begin{definition}
  Let $X, Y, Z$ be finite disjoint sets representing nodes, and let $T\subseteq X \times Y \times Z$ be the set of available hyperedges. Then $M\subseteq T$ is a \emph{matching} if for any distinct triples $(x_1,y_1,z_1) \in M, (x_2, y_2,z_2) \in M$, we have that $x_1 \ne x_2, y_1 \ne y_2, z_1 \ne z_2$. The \emph{3-dimensional matching} decision problem is to determine for a given $X, Y, Z, T$ and positive integer $k$ if there exists a matching $M$ with $|M| \ge k$.
\end{definition}
 
 The set of available tuples for $R$ will correspond to all edges in $T$. Our MTP constraint forces a decision on which subset of $T$ to take and include in the $\lambda$-completion. 
 
 However, if we simply queried to maximize $P(R(x,y,z))$, this completion need not correspond to a matching. Instead, we have the conjunct $R(x,y,z)\land U(x)$ which is maximized when each tuple chosen from $R$ has a different $x$ value. Similar conjuncts for $y$ and $z$ ensure that the query is maximized when using distinct $y$ and $z$ values. Putting all of these together ensures that the query probability is maximized when the subset of tuples chosen to complete $R$ form a matching.

 Finally, the last part of the query $(U(x) \land V(y)) \lor (U(x) \land W(z)) \lor (V(y) \land W(z))$ ensures that inference on $M_0$ is tractable, but it is unaffected by the choice of tuples in $R$.

\section{Approximate MTP Query Evaluation} \label{sec:approx}
With Section~\ref{s:intractability} answering definitively that a general-purpose algorithm for evaluating MTP query bounds is unlikely to exist, even when restricted to safe queries, an approximation is the logical next step. We now restrict our discussion to situations where we constrain a single relation, and dig deeper into the properties of MTP constraints to show their submodular structure. We then exploit this property to achieve efficient bounds with guarantees.

\subsection{On the Submodularity of Adding Tuples} \label{subsec:submod}
To formally define and prove the submodular structure of the problem, we analyze query evaluation as a set function on adding tuples. 
We begin with a few relevant definitions.

\begin{definition}
\label{def:setfun}
   Suppose that we have an OpenPDB $\gf$, with an MTP constraint $\varphi$  on a single relation
   $R$, and we let $\otups$ be the set of possible tuples we can add to $R$.
   Then the \emph{set query probability} function $S_{\pdb, \Query}:
   2^{\otups} \to [0,1]$ is defined as
   \begin{equation*}
     S_{\pdb,\Query}(X) = P_{\pdb \cup \{\langle t:\lambda \rangle | t \in X\}}(\Query).
   \end{equation*}
   
 \end{definition}

Intuitively, this function describes the probability of the query as a function of which
open tuples have been added.
It provides a way to reason about the combinatorial properties of this optimization problem.  
Observe that $S_{\pdb,\Query}(\emptyset)$ is the closed-world probability of the query, while $S_{\pdb,\Query}(\otups)$ is the open-world probability.

We want to show that $S_{\pdb,\Query}$ is a submodular set function.
\begin{definition}
A \emph{submodular set function} is a function $f: 2^\Omega \to \mathbb{R}$
such that for every $X\subseteq Y \subseteq \Omega$, and every $x
\in \Omega \setminus Y$, we have that $f(X\cup \{x\}) - f(X) \ge f(Y \cup \{x\})
- f(Y)$.  
\end{definition}

\begin{theorem}
\label{thm:submod}
The set query probability function $S_{\pdb, \Query}$ is submodular for any tuple
independent probabilistic database $\pdb$ and UCQ query $\Query$ without self-joins.
\end{theorem}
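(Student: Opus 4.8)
The plan is to prove submodularity directly from the definition, by fixing a set $X \subseteq Y \subseteq \otups$ and a tuple $t \in \otups \setminus Y$, and showing $S_{\pdb,\Query}(X \cup \{t\}) - S_{\pdb,\Query}(X) \ge S_{\pdb,\Query}(Y \cup \{t\}) - S_{\pdb,\Query}(Y)$. The key is to rewrite the marginal gain of adding tuple $t$ in terms of conditioning. Writing $\pdb_X$ for the database $\pdb$ augmented with $\langle s:\lambda\rangle$ for all $s \in X$, adding $t$ at probability $\lambda$ gives, by the tuple-independence semantics, $S_{\pdb,\Query}(X \cup \{t\}) = \lambda \cdot P_{\pdb_X}(\Query \mid t) + (1-\lambda)\cdot P_{\pdb_X}(\Query \mid \neg t)$, while $S_{\pdb,\Query}(X) = \lambda \cdot P_{\pdb_X}(\Query \mid t) \cdot [\dots]$ — more carefully, since $t \notin \pdb_X$ has probability $0$ there, $S_{\pdb,\Query}(X) = P_{\pdb_X}(\Query \mid \neg t)$. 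Hence the marginal gain is
\begin{equation*}
  \Delta_X(t) := S_{\pdb,\Query}(X \cup \{t\}) - S_{\pdb,\Query}(X) = \lambda\bigl(P_{\pdb_X}(\Query \mid t) - P_{\pdb_X}(\Query \mid \neg t)\bigr).
\end{equation*}
So it suffices to show $P_{\pdb_X}(\Query \mid t) - P_{\pdb_X}(\Query \mid \neg t) \ge P_{\pdb_Y}(\Query \mid t) - P_{\pdb_Y}(\Query \mid \neg t)$: adding more open tuples to the database can only \emph{decrease} the ``sensitivity'' of $\Query$ to the single tuple $t$.

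Next I would exploit monotonicity and the structure of a UCQ without self-joins. Since $\Query$ is a monotone (negation-free) query and $t$ is a positive literal, $\Query$ conditioned on $t = \mathrm{true}$ is itself a monotone formula $\Query_t$ obtained by substituting $t$'s constants and dropping the atom matching $t$ (using no self-joins, at most one atom in each disjunct can unify with $t$), and $\Query$ conditioned on $t = \mathrm{false}$ is the monotone formula $\Query_{\neg t}$ obtained by deleting every disjunct that uses that atom. Crucially $\Query_{\neg t} \models \Query_t$, i.e. $\Query_t = \Query_{\neg t} \lor (\text{the deleted disjuncts with } t \text{ substituted in})$. Writing $E$ for the event $\Query_{\neg t}$ and $F$ for the ``extra'' part (the residual query $\Query_t \setminus \Query_{\neg t}$, which does not mention the atom of $t$ at all), the quantity of interest is $P(\Query_t) - P(\Query_{\neg t}) = P(F \land \neg E) = P(F) - P(F \land E)$, evaluated in whichever database. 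Now the point is that $E$ and $F$ are both monotone in the shared open tuples of $\otups \setminus Y$, and raising those tuples' probabilities from $0$ (in $\pdb_X$) up to $\lambda$ (in $\pdb_Y$) can only raise $P(E)$, hence raise $P(F \land E)$ relative to $P(F)$; more precisely I would use the FKG/Harris inequality (positive correlation of monotone events under a product distribution) together with the law of total probability over the tuples in $Y \setminus X$ to show $P_{\pdb_Y}(F \land \neg E) \le P_{\pdb_X}(F \land \neg E)$, which is exactly $\Delta_Y(t) \le \Delta_X(t)$ after multiplying by $\lambda$.

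The main obstacle I expect is making the last step fully rigorous: it is not literally true that $P(F \land \neg E)$ is monotone decreasing in an \emph{arbitrary} tuple probability, because $F$ is monotone increasing in those tuples too. The clean way around this is to add the tuples of $Y \setminus X$ one at a time and condition on each being true or false; when such a tuple $s$ is set true, it simplifies $\Query_t$ and $\Query_{\neg t}$ by substitution exactly as $t$ did, and one argues that the ``gap'' $P(\Query_t) - P(\Query_{\neg t})$ is non-increasing under this operation — again using no self-joins so that $s$'s atom appears cleanly — and when $s$ is set false it only deletes disjuncts, which also cannot increase the gap. Induction on $|Y \setminus X|$ then closes the argument. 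An alternative, possibly cleaner, route is to prove the one-step inequality $\Delta_{X}(t) \ge \Delta_{X \cup \{s\}}(t)$ for a single extra tuple $s$ directly by a four-way case split on the truth of $s$ and $t$, observing that $P_{\pdb_X}(\Query\mid\cdot)$ is a convex combination (over $s$) of the $\pdb_{X\cup\{s\}}$ conditionals, and that the required inequality reduces to the second-difference $P(\Query\mid t,s) - P(\Query\mid\neg t,s) - P(\Query\mid t,\neg s) + P(\Query\mid\neg t,\neg s) \le 0$, i.e. a discrete concavity statement for the monotone UCQ — which is where the no-self-join hypothesis is essential, since otherwise a single database tuple can appear in multiple atoms and the substitution-based decomposition of $\Query$ into $\{E, F\}$ breaks down.
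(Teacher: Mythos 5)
Your proposal is correct in substance but reaches the result by a different route than the paper. The paper's proof works globally on the ground DNF $\Delta$ of $\Query$: it defines $\alpha$ (resp.\ $\beta$) as the disjunction of the conjuncts of $\Delta$ not falsified by the $R$-tuples missing from $X$ (resp.\ $Y$) and $\gamma$ as the disjunction of conjuncts containing the new tuple, places all four quantities on a single product space so that $S_{\pdb,\Query}(X)=P(\alpha)$, $S_{\pdb,\Query}(X\cup\{x\})=P(\alpha\lor\gamma)$, and so on, and then the entire inequality collapses to $P(\neg\alpha\land\gamma)\ge P(\neg\beta\land\gamma)$, immediate from $\alpha\Rightarrow\beta$ --- no conditioning and no induction. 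You instead express the marginal gain as $\lambda\bigl(P(\Query\mid t)-P(\Query\mid\neg t)\bigr)$ and induct over $Y\setminus X$ one tuple at a time, reducing to the second-difference inequality $P(\Query\mid t,s)-P(\Query\mid\neg t,s)-P(\Query\mid t,\neg s)+P(\Query\mid\neg t,\neg s)\le 0$. You are right to discard the FKG route (the residual event is itself monotone, so positive correlation cuts the wrong way), and right that the second difference is exactly where the no-self-join hypothesis enters; the step you leave unproved does go through: since each ground conjunct contains at most one atom of $R$, the grounding decomposes as $A\lor(t\land B)\lor(s\land C)$ with $A,B,C$ free of $t$ and $s$, and the second difference evaluates to $-P(B\land\neg A\land C)\le 0$. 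What the paper's argument buys is brevity --- one implication between events does all the work; what yours buys is modularity, since the second-difference formulation is the standard decreasing-marginal-returns criterion and isolates precisely the pairwise interaction that a self-join would break.
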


This gives us the desired submodularity property, which we can exploit to build efficient approximation algorithms.
 
\subsection{From Submodularity to Approximation}\label{subsec:approxalg}
Given the knowledge that the probability of a safe query without self-joins is submodular in the completion of a single relation, we are now tasked with using this to construct an efficient approximation.
Since we further know the probability is also monotone as we have restricted our language to UCQs, \citet{Nemhauser1978} tells us that we
can get a $1-\frac{1}{e}$ approximation using a simple greedy algorithm. The final requirement to achieve the approximation described in \citet{Nemhauser1978} is that our set function must have the property that $f(\emptyset)=0$. This can be achieved in a straightforward manner as follows.

\begin{definition}
   In the context of the set query probability function of Definition~\ref{def:setfun}, the \emph{normalized} set query probability function $S'_{\pdb, \Query}:
   2^{\otups} \to [0,1]$ is defined as
   \begin{equation*}
     S'_{\pdb,\Query}(X) = P_{\pdb \cup \{\langle t:\lambda \rangle | t \in X\}}(\Query) - P_{\pdb}(\Query).
   \end{equation*}
 \end{definition}
 
 \begin{proposition}
 Any normalized set query probability function $S'_{\pdb,\Query}$ is monotone, submodular, and satifies $S'_{\pdb,\Query}(\emptyset) = 0$.
 \end{proposition}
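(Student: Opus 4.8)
The plan is to observe that $S'_{\pdb,\Query}$ differs from $S_{\pdb,\Query}$ only by the additive constant $P_{\pdb}(\Query)$, so all three claimed properties reduce either to facts already established or to the negation-freeness of UCQs. First, by Definition~\ref{def:setfun} we have $S_{\pdb,\Query}(\emptyset) = P_{\pdb \cup \emptyset}(\Query) = P_{\pdb}(\Query)$, the closed-world query probability; hence $S'_{\pdb,\Query}(\emptyset) = S_{\pdb,\Query}(\emptyset) - P_{\pdb}(\Query) = 0$, giving the normalization condition.

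Next, for submodularity, fix any $X \subseteq Y \subseteq \otups$ and any $t \in \otups \setminus Y$. Because the constant $-P_{\pdb}(\Query)$ cancels in each difference, the marginal gain satisfies $S'_{\pdb,\Query}(X \cup \{t\}) - S'_{\pdb,\Query}(X) = S_{\pdb,\Query}(X \cup \{t\}) - S_{\pdb,\Query}(X)$, and likewise for $Y$. Thus the submodular inequality for $S'_{\pdb,\Query}$ is exactly the submodular inequality for $S_{\pdb,\Query}$, which is Theorem~\ref{thm:submod}. (Note $S_{\pdb,\Query}$ is the query probability of a UCQ without self-joins, so Theorem~\ref{thm:submod} applies.)

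Monotonicity follows the same way: the marginal gains of $S'_{\pdb,\Query}$ coincide with those of $S_{\pdb,\Query}$, so it suffices to show $S_{\pdb,\Query}(X) \le S_{\pdb,\Query}(Y)$ whenever $X \subseteq Y$. Since $\Query$ is a UCQ it is a monotone (negation-free) formula, so $\{\omega : \omega \models \Query\}$ is upward closed under setting additional ground atoms to true. Passing from the completion that adds the tuples in $X$ at probability $\lambda$ to the one that adds the tuples in $Y \supseteq X$ only raises the probabilities of the extra tuples in $Y \setminus X$ from $0$ to $\lambda$; a standard monotone-coupling argument — equivalently, the observation from \citet{Ceylan2016OpenWorldPD} that for UCQs the query probability is nondecreasing in every individual tuple probability — then gives $S_{\pdb,\Query}(X) \le S_{\pdb,\Query}(Y)$.

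The only step with any subtlety is monotonicity, and even there the real work is already carried by the negation-freeness of UCQs; everything else is bookkeeping about the additive constant $P_{\pdb}(\Query)$. I therefore expect no genuine obstacle: this proposition is essentially a corollary of Theorem~\ref{thm:submod} together with the monotone semantics of UCQs, and the proof can be stated in a few lines.
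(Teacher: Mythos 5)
Your proof is correct. The paper itself states this proposition without proof, treating it as an immediate consequence of Theorem~\ref{thm:submod}, and your argument supplies exactly the intended reasoning: the additive constant $-P_{\pdb}(\Query)$ gives normalization and cancels in all marginal gains (so submodularity transfers from $S_{\pdb,\Query}$), while monotonicity follows from the negation-freeness of UCQs, the same fact the paper invokes in Section~\ref{sec:background} to justify that open-world bounds for UCQs are attained at the full and empty completions. No gaps.
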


By simply normalizing the set query probability function, we can now directly apply the greedy approximation described in \citet{Nemhauser1978}. We slightly modify Algorithm~\ref{alg:LiftR} to efficiently compute the next best tuple to add based on the current database, and add it. This is repeated until adding another tuple would violate the MTP constraint. Finally, we say that $P_{\mathit{Greedy}}(\Query)$ is the approximation given by this greedy algorithm and recall that the true upper bound is $\overline{P}_{\gf}^\Phi(\Query)$. We observe that $P_{\mathit{Greedy}}(\Query) \le \overline{P}_{\gf}^\Phi(\Query)$. Furthermore, \citet{Nemhauser1978} tells us the following:
\begin{equation*}
    P_{\mathit{Greedy}}(\Query) - P_\pdb(\Query) \ge (1-\frac{1}{e})(\overline{P}_{\gf}^\Phi(\Query) - P_\pdb(\Query))   
\end{equation*}

Combining these and multiplying through gives us the following upper and lower bound on the desired probability.
\begin{equation*}
    P_{\mathit{Greedy}}(\Query) \le \overline{P}_{\gf}^\Phi(\Query) \le \frac{e\cdot P_{\mathit{Greedy}}(\Query) - P_\pdb(\Query)}{e-1}
\end{equation*}

It should be noted that depending on the query and database, it is possible for this upper bound to exceed $1$.

\section{Discussion, Future \& Related Work} \label{sec:disc}

We propose the novel problem of constraining open-world probabilistic databases at the schema level, without having any additional ground information over individuals. We introduced a formal mechanism for doing this, by limiting the \emph{mean tuple probability} allowed in any given completion, and then sought to compute bounds subject to these constraints. We now discuss remaining open problems and related work.

Section~\ref{sec:exact} showed that there exists a query that is NP-hard to compute exactly, and also presented a tractable algorithm for a class of inversion-free queries. The question remains how hard the other queries are - in particular, is the algorithm presented complete. Is there a complexity dichotomy, that is, a set of syntactic properties that determine the hardness of a query subject to MTP constraints.
Questions of this form are a central object of study in probabilistic databases. 
It has been explored for conjunctive queries \citep{Dalvi2007TheDO}, UCQs \citep{Dalvi2012TheDO}, and a more general class of queries with negation \citep{Fink2016DichotomiesFQ}.

The central goal of our work is to find stronger semantics based on OpenPDBs, while still maintaining their desirable tractability. This notion of achieving a powerful semantics while maintaining tractability is a common topic of study. \citet{Raedt2015ProbabilisticP} study this problem by using a probabilistic interpretation of logic programs to define a model, leading to powerful semantics but a more limited scope of tractability \citep{Fierens2015InferenceAL}. The description logics \citep{Nardi2003TheDL} is a knowledge representation formalism that can be used as the basis for a semantics. This is implemented in a probabilistic setting in, for example, probabilistic ontologies \citep{Riguzzi2012EpistemicAS,Riguzzi2015ReasoningWP}, probabilistic description logics \citep{Heinsohn1994ProbabilisticDL}, probabilistic description logic programs \citep{Lukasiewicz2005ProbabilisticDL}, or the bayesian description logics \citep{10.1007/978-3-319-08587-6_37}.

Probabilistic databases in particular are of interest due to their simplicity and practicality. Foundational work defines a few types of probabilistic semantics, and provides efficient algorithms as well as when they can be applied \citep{Dalvi2004EfficientQE, Dalvi2007TheDO, Dalvi2012TheDO}. These algorithms along with practical improvements are implemented as industrial level systems such as MystiQ \citep{R2008ManagingPD}, SPROUT \citep{Olteanu2009SPROUTLV}, MayBMS \citep{Huang2009MayBMSAP}, and Trio which implements the closely related Uncertainty-Lineage Databases \citep{Benjelloun2007DatabasesWU}. Problems outside of simple query evaluation are also points of interest for PDBs, for example the most probable database \citep{gribkoff2014most}, or of ranking the top-k results \citep{R2007EfficientTQ}. In the context of OpenPDBs in particular, \citet{Grohe2018ProbabilisticDW} study the notion of an infinite open world, using techniques from analysis to explore when this is feasible.

\newpage

\section*{Acknowledgements}
The authors thank Yitao Liang and YooJung Choi for helpful feedback. This work is partially supported by NSF grants \#IIS1657613, \#IIS-1633857, \#CCF-1837129, DARPA XAI grant
\#N66001-17-2-4032, NEC Research and a gift from Intel.

\bibliographystyle{named}
\bibliography{opref}

\newpage

\appendix

\section{Proofs of Theorems, Lemmas, and Propositions}
\subsection{Proof of Proposition~\ref{thm:credal}}
\begin{proof}
  To prove this, we need to show that $\kgp$ is both closed and convex.
  
  Due to the way our constraints are defined, we know that $\kgp=K_\gf
  \cap K^\Phi$, where $K^\Phi$ is the set of all completions satisfying $\Phi$ (but
  not necessarily having all tuple probabilities $\le \lambda$). We already know
  that $K_\gf$ is credal, and thus closed and convex. $K^\Phi$ is a half-space,
  which we also know is closed and convex. The intersection of closed spaces is closed, and the intersection of convex spaces is convex, so $\kgp$ is credal.
\end{proof}

\subsection{Proof of Theorem~\ref{thm:extrema}}

\begin{proof}
  Since $\kgp$ is credal, we are interested here in determining the point of
  extrema of $\kgp$, as this will tell us precisely which completions can
  represent boundaries.

  Consider the construction of the set $\kgp$, and suppose that there are $d$ possible open-world tuples, meaning that $\kgp \subseteq \mathbb{R}^d$. As we observed in the proof of Theorem~\ref{thm:credal}, $\kgp = K_\gf \cap K^{\Phi}$, where $K^\Phi$ is the set of all completions satisfying $\Phi$. We now make three key observations about these sets:
  \begin{enumerate}
      \item Each individual possible open-world tuple is described by the intersection of 2 half-spaces: that is, the tuple on dimension $i$ is described by $x_i \ge 0$ and $x_i \le \lambda$. $K_\gf$ is the intersection of all $2d$ of these half-spaces.
      \item For any individual open-world tuple, the boundaries of the two half-spaces that describe it cannot intersect each other.
      \item An MTP constraint is a linear constraint, and thus can be described by a single half-space. So $K^{\Phi}$ is described by the intersection of these $|\Phi|$ half-spaces.
  \end{enumerate}
  
  Observations 1 and 3, together with Lemma~\ref{lem:hfspext} tells us that any point of extrema of $\kgp$ must be given by the intersection of the boundaries of at least $d$ of the half-spaces that form $\kgp$. Observation 3 tells us that at most $|\Phi|$ of these half-spaces come from MTP constraints, leaving the boundaries of at least $d-|\Phi|$ half-spaces which come from $K_\gf$. Finally, observation 2 tells us that each of these $d-|\Phi|$ half-spaces is describing a different open world tuple. But this means we must have at least $d-|\Phi|$ tuples which lie on the boundary of one of their defining half-spaces: they must be either $0$ or $\lambda$. 
  
\end{proof}

\subsection{Proof of Theorem~\ref{thm:hard}}
Before we present the formal proof, we state and prove 2 Lemmas we will need.
 
 \begin{lemma}
 \label{lem:ordercomp}
    Suppose we have two completions $P_1$ and $P_2$ of $R$, which only differ on a single triple, that is $P_1 = P_0 \cup \{x_1, y_1, z_1\}$ and $P_2 = P_0 \cup \{x_2, y_2, z_2\}$. Further suppose that $y_1=y_2$, $z_1=z_2$, and that $P_0$ contains no triples with x-value $x_1$, but does contain at least 1 triple with x-value $x_2$. Then $P_1(M_0) > P_2(M_0)$.
 \end{lemma}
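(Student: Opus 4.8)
The plan is to condition on the interpretation of all the unary atoms ($U$, $V$, $W$) and exploit the shape of $M_0$. First observe that $M_0 = \Theta \lor \Psi$, where $\Theta = \exists x\exists y\exists z\, R(x,y,z)\land\bigl(U(x)\lor V(y)\lor W(z)\bigr)$ collects the three $R$-conjuncts and $\Psi = \bigl(\exists x\, U(x)\land\exists y\, V(y)\bigr) \lor \bigl(\exists x\, U(x)\land\exists z\, W(z)\bigr) \lor \bigl(\exists y\, V(y)\land\exists z\, W(z)\bigr)$ collects the other three; crucially, $\Psi$ does not mention $R$. Since the database is tuple-independent, $R$ is independent of the unary relations, and $P_1,P_2$ agree on every $U,V,W$ atom, so I can write $P_i(M_0) = \sum_{\omega} \Pr(\omega)\, g_i(\omega)$, where $\omega$ ranges over interpretations of the unary atoms, $g_i(\omega)=1$ whenever $\omega\models\Psi$, and otherwise $g_i(\omega) = 1 - (1-\lambda)^{|P_i\cap S(\omega)|}$ with $S(\omega) = \{(x,y,z) : \omega\models U(x)\lor V(y)\lor W(z)\}$ (here every selected triple of $R$ has probability $\lambda$). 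In the difference $P_1(M_0)-P_2(M_0)$ all the $\omega\models\Psi$ terms cancel, so it suffices to analyse $g_1(\omega)-g_2(\omega)$ on worlds with $\omega\not\models\Psi$.

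Next I characterise those worlds: $\omega\not\models\Psi$ iff at most one of $U,V,W$ is non-empty in $\omega$. If $V$ is the (possibly) non-empty one while $U,W$ are empty, then $S(\omega) = \{(x,y,z) : \omega\models V(y)\}$; since the two differing triples share their $y$-coordinate ($y_1=y_2$), either both lie in $S(\omega)$ or neither does, hence $|P_1\cap S(\omega)| = |P_2\cap S(\omega)|$ and $g_1(\omega)=g_2(\omega)$. The same holds (via $z_1=z_2$) when only $W$ may be non-empty, and trivially when all three are empty. So the only contributing worlds are the ``$U$-only'' ones, in which $S(\omega) = \{(x,y,z) : \omega\models U(x)\}$ and $|P_i\cap S(\omega)| = N_0(\omega) + \mathbf{1}[\omega\models U(x_i)]$, where $N_0(\omega)$ counts the triples of $P_0$ whose $x$-coordinate is true under $\omega$. (I take the generic case that neither differing triple already lies in $P_0$; if $(x_2,y_2,z_2)\in P_0$ then $P_2=P_0\subseteq P_1$, and one checks directly, using monotonicity of $M_0$ and the world where only $U(x_1)$ is true, that adding $(x_1,y_1,z_1)$ strictly raises the probability.)

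Now group the $U$-only worlds into $2\times2$ blocks: fix the truth values $\rho$ of every unary atom other than $U(x_1),U(x_2)$ (note $x_1\ne x_2$, since $P_0$ has an $x_2$-triple but no $x_1$-triple; and $\rho$ must make $V,W$ empty to contribute), and let $(U(x_1),U(x_2))$ range over $\{0,1\}^2$. Let $n_\rho$ count the triples of $P_0$ whose $x$-coordinate, other than $x_1$ or $x_2$, is true under $\rho$, and let $m_1=0$ and $m_2\ge1$ be the numbers of triples of $P_0$ with $x$-coordinate $x_1$, resp.\ $x_2$. Writing $p\in(0,1)$ for the common probability the construction assigns to $U(x_1)$ and $U(x_2)$, a short computation of $g_1-g_2$ on the four sub-worlds (it vanishes on $(0,0)$ and $(1,1)$) shows the block contributes
\[ \Pr(\rho)\,\lambda\,p(1-p)\,(1-\lambda)^{n_\rho}\bigl[(1-\lambda)^{m_1}-(1-\lambda)^{m_2}\bigr] = \Pr(\rho)\,\lambda\,p(1-p)\,(1-\lambda)^{n_\rho}\bigl[1-(1-\lambda)^{m_2}\bigr] \]
to $P_1(M_0)-P_2(M_0)$. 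Since $\lambda>0$ and $m_2\ge1$, each block contributes non-negatively, and a block with $\Pr(\rho)>0$ — e.g.\ $\rho$ setting all remaining unary atoms false, which is possible because every unary probability lies in $(0,1)$ — contributes strictly positively. Summing over $\rho$ yields $P_1(M_0)>P_2(M_0)$.

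The main obstacle is that this is not a worldwise inequality: after conditioning on the unary atoms, $P_2$ can beat $P_1$ — precisely when $U(x_1)$ is false but $U(x_2)$ is true, so the extra triple of $P_2$ is ``useful'' while that of $P_1$ is not. The proof must show the aggregate still favours $P_1$, and the two levers that make this work are (i) the syntactic fact that $\lnot\Psi$ forces at most one of $U,V,W$ non-empty, which kills the $V$- and $W$-worlds through $y_1=y_2$ and $z_1=z_2$, and (ii) the asymmetry $m_1=0<m_2$ — more already-selected triples sit ``below'' $x_2$ than below $x_1$ — which makes the $(1,0)$ sub-world dominate the $(0,1)$ sub-world after pairing inside each block. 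A minor caveat is that the strict inequality needs the unary-atom probabilities to be non-degenerate (in $(0,1)$, and equal for $x_1,x_2$), which is guaranteed by the construction used in the reduction.
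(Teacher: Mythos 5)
Your proof is correct, and it takes a genuinely different and more careful route than the paper's. The paper argues directly on the grounded DNF $\Delta$: it observes that the two completions differ only in the conjuncts built from $R(x_1,y_1,z_1)$ versus $R(x_2,y_1,z_1)$, dismisses the $V$- and $W$-conjuncts because the completions agree on the $y$- and $z$-projections, and then informally compares the ``new probability mass'' contributed by $R(x_1,y_1,z_1)\land U(x_1)$ (where no other $x_1$-triple competes) against that of $R(x_2,y_1,z_1)\land U(x_2)$ (where existing $x_2$-triples absorb most of the mass), concluding strict inequality whenever $P(U(x_2))<1$. That argument is short but leans on an additive intuition about DNF terms that is not literally valid. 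Your proof makes this rigorous: conditioning on the unary atoms turns the comparison into an exact computation of $1-(1-\lambda)^{|P_i\cap S(\omega)|}$, the $\lnot\Psi$ case analysis shows why $y_1=y_2$ and $z_1=z_2$ kill the $V$- and $W$-worlds, and the $2\times2$ block pairing over $(U(x_1),U(x_2))$ exhibits exactly where $P_2$ locally beats $P_1$ and why the aggregate still favours $P_1$ (the $(1,0)$ sub-world contributes $\lambda(1-\lambda)^{n_\rho}$ while the $(0,1)$ sub-world contributes only $-\lambda(1-\lambda)^{n_\rho+m_2}$ with $m_2\ge 1$). Your explicit caveat that strictness requires the unary probabilities to be non-degenerate and equal across $x_1,x_2$ is a real observation the paper elides: the lemma as literally stated can fail if $P(U(x_1))\ll P(U(x_2))$, and it is only the reduction's uniform choice of $0.8$ that guarantees it. What your approach buys is rigor and a precise delineation of the lemma's hypotheses; what the paper's buys is brevity and a template reused in the submodularity proof of Theorem~\ref{thm:submod}.
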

 \begin{proof}
   We will apply a similar technique here to the one used to prove Theorem~\ref{thm:submod}, where we directly examine $\Delta$, the logical formula found by grounding $M_0$. Since $M_0$ is a union of conjunctive queries, $\Delta$ must be a DNF. Each conjunct either does not contain $R$, in which case it does not vary with the choice of completion, or it contains it exactly once. Any conjunct containing an atom of $R$ not assigned probability by a completion is logically false.
   
   In order to prove that $P_1(M_0) > P_2(M_0)$, let us compare the ground atoms that result from each. It is clear that the only spot on which they differ is on conjuncts involving $R(x_1,y_1,z_1)$ or $R(x_2,y_1,z_1)$. Any conjuncts involving one of these and $V$ or $W$ will also have an identical effect on the probability of the query, since the completions are identical over $y$ and $z$.
   
   Finally this means we need to compare the term $R(x_1,y_1,z_1),U(x_1)$ with the term $R(x_2,y_1,z_1),U(x_2)$. Observe that we know $P_0$ contains triples with x-value $x_2$, which means the term only contributes new probability mass when $U(x_2)$ is true \emph{and} none of the other triples involving $x_2$ are true. However, $P_0$ does not contain any triples with x-value $x_1$, so the term $R(x_1,y_1,z_1),U(x_1)$ contributes the maximum probability possible. Thus, for any choice of probabilities on $U$ such that $U(x_2)<1$, we have that $P_1(M_0) > P_2(M_0)$.
 \end{proof}
 
 \begin{lemma}
 \label{lem:maxmatch}
 The upper bound $\overline{P}(M_0)$ is maximized if and only if $P$ is a completion formed by a matching of size $k$, where $k$ is the maximum number of tuples with probability $\lambda$ that can be added to $R$ in $M_0$.
 \end{lemma}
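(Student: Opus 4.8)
My plan is to reduce the lemma to a clean combinatorial claim and then prove it in two directions, the second resting on the exchange principle of Lemma~\ref{lem:ordercomp}. First I would invoke Theorem~\ref{thm:extrema} and its corollary to assume that an optimal $\Phi$-constrained $\lambda$-completion of $R$ adds exactly $k$ tuples, each of probability $\lambda$, where $k$ is the budget allowed by the MTP constraint. A candidate completion is then just a size-$k$ set $P$ of available $R$-tuples, and it suffices to establish: (a) every such $P$ that forms a matching yields one and the same query probability, call it $V_k$; and (b) every such $P$ that is not a matching yields a query probability strictly below $V_k$. Together (a) and (b) say that $\overline{P}(M_0)$ is maximized exactly at the size-$k$ matchings, which is the lemma, and is what lets the reduction of Theorem~\ref{thm:hard} detect a size-$k$ matching.

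For (a) I would use the symmetry of $M_0$ under relabelings that simultaneously permute the three argument positions of $R$ and the corresponding relation among $\{U,V,W\}$. When $P$ is a matching, the added atoms $R(x_i,y_i,z_i)$ have pairwise distinct values in each position, so no two of them ever interact through a shared $U$-, $V$-, or $W$-atom; grounding $M_0$ yields a DNF whose $R$-free clauses contribute a probability independent of $P$, while its $R$-clauses split across the $k$ added tuples into $k$ independent, symmetric pieces. Hence the query probability depends only on $k$, $\pdb$, and $\lambda$, so $V_k$ is well defined.

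For (b) I would argue by an exchange built on Lemma~\ref{lem:ordercomp}. Assume the biconditional is non-vacuous, so a size-$k$ matching $M^*$ of available tuples exists. Let $P$ be a size-$k$ non-matching; two of its tuples agree in some position, and by symmetry I may take it to be the first: $t_1=(x^*,y_1,z_1)$ and $t_2=(x^*,y_2,z_2)$ with $(y_1,z_1)\neq(y_2,z_2)$. Then $P$ uses fewer than $k$ distinct first-position values whereas $M^*$ uses $k$, so some first value $x'$ of $M^*$ is unused by $P$; I replace $t_2$ by $(x',y_2,z_2)$. With $P_0=P\setminus\{t_2\}$, which has a tuple with first value $x^*$ but none with first value $x'$, Lemma~\ref{lem:ordercomp} --- being a statement about the DNF obtained from $M_0$, hence valid for arbitrary tuple sets and not just subsets of the available edges --- makes this replacement strictly increase the query probability. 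Iterating, the potential $\sum_{\text{positions}}(k-\#\{\text{distinct values used there}\})$ strictly decreases, so after finitely many strict increases I reach a matching, of value $V_k$ by (a); therefore $P(M_0)<V_k$.

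The hard part is the bookkeeping in both halves. In (a) the symmetry and independence must be made precise on the grounded DNF, not hand-waved on $M_0$. In (b) one must check that each single exchange strictly improves the probability simultaneously over every truth assignment to $U,V,W$ --- exactly the subtle point already handled in the proof of Lemma~\ref{lem:ordercomp} (the $V$- and $W$-conjuncts are unaffected because the two completions agree on the last two positions, and the $R(\cdot)\wedge U(\cdot)$ comparison is strict under general-position parameters) --- and that the stated potential really does drop, so the iteration terminates. Finally, the biconditional is best read as characterizing the maximizers under the standing assumption that a size-$k$ matching of available tuples exists; absent that, (a) and (b) still hold verbatim and together show the maximum lies strictly below $V_k$, which is the fact the hardness reduction exploits.
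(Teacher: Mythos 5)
Your proof is correct and rests on the same mechanism as the paper's: iterated application of the exchange step of Lemma~\ref{lem:ordercomp} to show that matchings dominate non-matchings. In fact your write-up is more complete than the paper's one-sentence argument, since you supply the two pieces it leaves implicit --- that all size-$k$ matchings attain a common value $V_k$ (needed for the ``if'' direction of the biconditional) and that the exchange process terminates at a matching via a strictly decreasing potential --- and you correctly note that Lemma~\ref{lem:ordercomp} is a statement about the grounded DNF and so applies even when intermediate completions use tuples outside the available set $T$.
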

  \begin{proof}
   Observe that if we begin with a completion given by a matching, we can repeatedly apply Lemma~\ref{lem:ordercomp} to arrive at any completion. Thus a completion given by a matching must have higher probability than any completion not given by a matching.
 \end{proof}

Finally, we are ready to present the proof of Theorem~\ref{thm:hard}.
 
 \begin{proof}
 Suppose we are given an instance of a 3-dimensional matching problem $X,Y,Z,T$ and an integer $k$. Let $U(x), V(y), W(z)$ be $0.8$ wherever $x \in X, y \in Y$, or $z \in Z$ respectively, and 0 everywhere else. Additionally, let $R(x,y,z)$ be unknown for any $(x,y,z) \in T$, and 0 otherwise. Finally, we place an MTP constraint on $R$ ensuring that at most $k$ tuples can be added, and let $\lambda = 0.8$. Then Lemma~\ref{lem:maxmatch} tells us that $M_0$ evaluated on this database will be maximized if and only if the completion used corresponds to a matching of size $k$. We determine this probability $P_{max}$ using a standard probabilistic database query algorithm, and fixing $R$ to have entries $0.8$ for some disjoint set of triples.
 
 Finally, we use our oracle for MTP constrained query evaluation to check $\overline{P}(M_0)$ with the database we constructed from the matching problem. We compare the upper bound given by the oracle, and if it is equal to $P_{max}$, Lemma~\ref{lem:maxmatch} tells us that a matching of size $k$ does exist. Similarly, if the upper bound given by the oracle is lower than $P_{max}$, Lemma~\ref{lem:maxmatch} tells us a matching of size $k$ does not exist.
 \end{proof}

\begin{lemma}
  \label{lem:hfspext}
  If $S \subseteq \mathbb{R}^n$ is a set formed by the intersection of $k<n$
  half-spaces, $S$ has no points of extrema. 
\end{lemma}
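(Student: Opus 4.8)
The plan is to argue directly from the definition of a point of extrema (extreme point) of a convex set: a point $p \in S$ is extreme if it cannot be written as $\frac{1}{2}(p_1 + p_2)$ for distinct $p_1, p_2 \in S$. I would show that from any $p \in S$ one can always move a small amount in both the $+v$ and $-v$ directions for some fixed nonzero direction $v$, and still remain in $S$; this exhibits $p$ as the midpoint of a nondegenerate segment inside $S$, so $p$ is not extreme.

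The key step is to produce such a direction $v$. Write $S = \bigcap_{i=1}^{k} H_i$ where each $H_i = \{x : a_i^\top x \le b_i\}$ is a half-space, with $a_i \ne 0$, and $k < n$. Fix $p \in S$. The vectors $a_1, \dots, a_k$ span a subspace of $\mathbb{R}^n$ of dimension at most $k < n$, so their orthogonal complement is nontrivial; pick any nonzero $v$ in it, so that $a_i^\top v = 0$ for every $i$. Then for every real $t$ and every $i$ we have $a_i^\top(p + tv) = a_i^\top p \le b_i$, so $p + tv \in H_i$ for all $i$, hence $p \pm tv \in S$ for all $t$. In particular $p = \frac{1}{2}\big((p+v) + (p-v)\big)$ with $p+v \ne p-v$ both in $S$, so $p$ is not an extreme point. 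Since $p$ was arbitrary, $S$ has no points of extrema.

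I expect the only genuine subtlety to be a bookkeeping one: making sure the argument is vacuously fine (or trivially true) in the degenerate cases — for instance if $S = \emptyset$, if some $H_i$ is redundant, or if $k = 0$ — none of which obstruct the conclusion, since the dimension count $\dim \mathrm{span}\{a_1,\dots,a_k\} \le k < n$ still forces a nonzero $v$ orthogonal to all the $a_i$ (when $k=0$ any nonzero $v$ works, and when $S=\emptyset$ the statement holds vacuously). The core linear-algebra fact — fewer than $n$ vectors cannot span $\mathbb{R}^n$, so their orthogonal complement contains a nonzero vector — is what drives everything, and translating "lies in the common boundary of fewer than $n$ half-spaces" into "has a free direction" is the one place to be careful, but it is exactly the contrapositive of what Theorem~\ref{thm:extrema} invokes, so the framing is already set up for us.
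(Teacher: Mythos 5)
Your proof is correct and follows essentially the same route as the paper's: a dimension count shows the $k<n$ constraint normals leave a nonzero free direction $v$, and translating along $\pm v$ keeps you inside $S$, so no point can be extreme. If anything, your version is more carefully stated than the paper's one-line argument (which loosely speaks of ``linear equalities''), since choosing $v$ orthogonal to all the $a_i$ works for every $p\in S$ regardless of which constraints are tight.
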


\begin{proof}
  Written as a set of linear equalities, the solution clearly must have at least
  1 degree of freedom. This indicates that for any potential extrema point $x$,
  one can move in either direction along this degree of freedom to construct an
  open line intersecting $x$, but entirely contained in $S$.
\end{proof}

\subsection{Proof of Theorem~\ref{thm:submod}}
\begin{proof}
Without directly computing probabilities, let us inspect $\Delta$, the logical
formula we get by grounding $\Query$. $\Query$ is a union of conjunctive
queries, and thus $\Delta$ is a very large disjunction of conjuncts. Each conjunct can
contain our constrained relation $R$ at most once due to the query not having self-joins, and any one of these conjuncts
containing an atom of $R$ not assigned any probability is logically false.

Next, to show that $S_{\pdb, \Query}$ is submodular, let $X \subseteq Y
\subseteq \otups$, and let $x \in \otups \setminus Y$ be given. We assign names
to the following subformulas of $\Delta$
\begin{itemize}
\item[--] $\alpha$ ($\beta$) is the disjunction of all conjuncts of $\Delta$ which are not logically false due to missing $R_B$ tuples
  in $X$ ($Y$)
\item[--] $\gamma$ is the disjunction of all conjuncts of $\Delta$ containing
  the tuple $x$
\end{itemize}

Additionally, since $X \subseteq Y$, we
also know that $\alpha \Rightarrow \beta$.  Now, we make a few observations relating
these quantities with our desired values for submodularity:
\begin{itemize}
\item[--] $S_{\pdb, \Query}(X) = P(\alpha)$
\item[--] $S_{\pdb, \Query}(Y) = P(\beta)$
\item[--] $S_{\pdb, \Query}(X \cup \{x\}) = P(\alpha \lor \gamma)$
\item[--] $S_{\pdb, \Query}(Y \cup \{x\}) = P(\beta \lor \gamma)$
\end{itemize}

Finally, we have the following:
\begin{align*}
  S_{\pdb, \Query}(X \cup \{x\}) - S_{\pdb, \Query}(X) &= P(\alpha \lor \gamma) - P(\alpha) \\
                                                       &= P(\neg \alpha \land \gamma) \\
                                                       & \ge P(\neg \beta \land \gamma) \\
                                                       &= P(\beta \lor \gamma) - P(\beta) \\
                                                       &= S_{\pdb, \Query}(Y \cup \{x\}) - S_{\pdb, \Query}(Y)
\end{align*}

 \end{proof}

\section{General Algorithm for Inversion-Free Queries}
We now present an algorithm for doing exact MTP query evaluation on inversion-free queries. Suppose that we have a probabilistic database $\pdb$, a domain $T$ of constants denoted $c$, a query $\Query$, and an MTP constraint on relation $R(x_1, x_2, \dots, x_r)$ allowing us to add $B$ tuples. For any $I \subseteq \{1,\dots,r\}$, we let $A(x_{i_1}/c_{i_1}, x_{i_2}/c_{i_2},\dots, x_{i_{|I|}}/c_{i_{|I|}}, b)$ denote the upper query probability of $\Query(x_{i_1}/c_{i_1}, x_{i_2}/c_{i_2},\dots, x_{i_{|I|}}/c_{i_{|I|}})$ subject to an MTP constraint allowing budget $b$ on the relevant portion of $R$. That is, $A$ tells us the highest probability we can achieve for a partial assignment given a fixed budget. Observe that we can compute all entries of $A$ using a slight modification of Algorithm~\ref{alg:LiftR}. This will take time polynomial in $|T|$.

Next, we impose an ordering $c_1, c_2, \dots, c_{|T|}$ on the domain. For any $I \subseteq \{1,\dots,r\}$, we let $D(j, x_{i_2}/c_{i_2},\dots, x_{i_{|I|}}/c_{i_{|I|}}, b)$ denote the upper query probability of
\begin{equation}
    \bigvee_{c \in \{c_1, \dots c_j\}} \Query(x_{i_1}/c, x_{i_2}/c_{i_2},\dots, x_{i_{|I|}}/c_{i_{|I|}})
\end{equation}

with a budget of $b$ on the relevant portions of $R$. Then $D(|T|, x_{i_2}/c_{i_2},\dots, x_{i_{|I|}}/c_{i_{|I|}}, b)$ considers all possible substitutions in our first index, meaning we no longer need to worry about it. Doing this repeatedly would allow us to perform exact MTP query evaluation. However, $D$ is non-trivial to compute, and cannot be done by simply modifying Algorithm~\ref{alg:LiftR}. Instead, we observe the following recurrence:
\begin{align*}
    D&(j+1, x_{i_2}/c_{i_2},\dots, x_{i_{|I|}}/c_{i_{|I|}}, b) \\  &= 
    \max_{k \in \{1,\dots,b\}} 1 - D(j, x_{i_2}/c_{i_2},\dots, x_{i_{|I|}}/c_{i_{|I|}}, b-k) \\ \cdot & A(x_{i_1}/c_{j+1}, x_{i_2}/c_{i_2},\dots, x_{i_{|I|}}/c_{i_{|I|}}, k)
\end{align*}

Intuitively, this recurrence is saying that since the tuples from each fixed constant are independent of each other, we can add a new constant to our vocabulary by simply considering all combinations of budget assignments. This recurrence can be implemented efficiently, yielding a dynamic programming algorithm that runs in time polynomial in the domain size and budget.

The keen reader will now observe that the above definition and recurrence only make sense if $\Query$ immediately reaches Step 5 of Algorithm~\ref{alg:LiftR}. While this is true, we see that Steps 0 and 1 have no effect on this recurrence, and Steps 2 and 4 correspond to multiplicative factors. For a query that reaches Step 3: inclusion-exclusion, we indeed need to construct such matrices for each sub-query. Notice that the modified algorithm would only work in the case where we can always pick a common $x_i$ for all sub-queries to do dynamic programming on - that is, when the query is inversion-free.

\end{document}